\def\set@curr@file#1{\def\@curr@file{#1}} %
\newcommand\pa[1]{\ensuremath{\left( #1 \right)}} %
\newcommand\pb[1]{\ensuremath{\left[ #1 \right]}} %
\newcommand\pc[1]{\ensuremath{\left\{ #1 \right\}}} %
\newcommand\R{\ensuremath{\mathbb{R}}} %
\newcommand\E{\ensuremath{\mathbb{E}}} %
\newcommand\Prob{\ensuremath{\mathbb{P}}} %
\newcommand\sC{\ensuremath{\mathcal{C}}}
\newcommand\sD{\ensuremath{\mathcal{D}}}
\newcommand\sE{\ensuremath{\mathcal{E}}}
\newcommand\sL{\ensuremath{\mathcal{L}}}
\newcommand\sX{\ensuremath{\mathcal{X}}}
\newcommand\sS{\ensuremath{\mathcal{S}}}
\newcommand\sR{\ensuremath{\mathcal{R}}}
\newcommand\sT{\ensuremath{\mathcal{T}}} 
\newcommand\sF{\ensuremath{\mathcal{F}}} 
\newcommand\sO{\ensuremath{\mathcal{O}}}
\newcommand{\norm}[1]{\left\lVert#1\right\rVert}
\DeclareMathOperator*{\argmin}{arg\,min}
\newif\ifhighresfig
\newcolumntype{C}[1]{>{\arraybackslash\centering}p{#1}}
\newcolumntype{R}[1]{>{\arraybackslash\raggedleft}p{#1}}
\newcolumntype{L}[1]{>{\arraybackslash\raggedright}p{#1}}
\newsavebox\CBox 
\def\textBF#1{\sbox\CBox{#1}\resizebox{\wd\CBox}{\ht\CBox}{\textbf{#1}}}
\newcommand{\wpq}[1]{}
\newcommand{\nms}[1]{}
\title[Sample-Specific Debiasing for Better Image-Text Models]{Sample-Specific Debiasing for Better Image-Text Models}
\newcommand{\addrmit}{Massachusetts Institute of Technology, Cambridge, MA, USA}
\newcommand{\addrbidmc}{Beth Israel Deaconess Medical Center, Harvard Medical School, Boston, MA, USA}
\author{\Name{Peiqi Wang}$^1$ \Email{wpq@mit.edu}
        \AND
        \Name{Yingcheng Liu}$^1$ \Email{liuyc@mit.edu}
        \AND
        \Name{Ching-Yun Ko}$^1$ \Email{cyko@mit.edu}
        \AND
        \Name{William M. Wells}$^{1}$ \Email{sw@bwh.harvard.edu}
        \AND
        \Name{Seth Berkowitz}$^2$ \Email{sberkowi@bidmc.harvard.edu}
        \AND
        \Name{Steven Horng}$^2$ \Email{shorng@bidmc.harvard.edu}
        \AND
        \Name{Polina Golland}$^1$ \Email{polina@csail.mit.edu}
        \AND
        $^1$ \addr \addrmit \\ 
        $^2$ \addr \addrbidmc
}
\begin{document}

\maketitle

\begin{abstract}
Self-supervised representation learning on image-text data facilitates crucial medical applications, such as image classification, visual grounding, and cross-modal retrieval. One common approach involves contrasting semantically similar (positive) and dissimilar (negative) pairs of data points. Drawing negative samples uniformly from the training data set introduces false negatives, i.e., samples that are treated as dissimilar but belong to the same class. In healthcare data, the underlying class distribution is nonuniform, implying that false negatives occur at a highly variable rate. To improve the quality of learned representations, we develop a novel approach that corrects for false negatives. Our method can be viewed as a variant of debiased contrastive learning that uses estimated sample-specific class probabilities. We provide theoretical analysis of the objective function and demonstrate the proposed approach on both image and paired image-text data sets. Our experiments illustrate empirical advantages of sample-specific debiasing.
\end{abstract}

\section{Introduction}

\nms{High-level feedback: Right now, when I read the intro, I lose the thread of the argument for why you are doing what you are doing, because each paragraph gets lost in the details of the related work. I would recommend either: (1) expanding the overview paragraph at the beginning to cover every step in the logic chain or (2) splitting the related work into its own mini subsection or even separate section. For #1, the "logic chain" seems to be: We want to do self-supervised representation learning on image-text data. $\rightarrow$ We need contrastive learning  $\rightarrow$ Contrastive learning suffers from a false negative problem  $\rightarrow$ Current approaches for dealing with false negatives assume a uniform class distribution, which is not accurate  $\rightarrow$ We propose to do xyz to compensate.  }

In this paper, we propose and demonstrate a novel approach for contrastive learning of image-text representations. Specifically, we propose to estimate sample-specific class probabilities, i.e., the latent class probability of each data point, to appropriately compensate for the effect of false negative samples in the learning procedure. Our method achieves state-of-the-art performance on a wide range of downstream tasks.

\nms{Is ``sample-specific class frequencies" standard terminology in the literature? To someone who doesn't know your method already, it sounds like you're assigning class frequencies for the individual sample itself (as opposed to its potential contrastive partners), which doesn't make sense. What about something like: ``sample-specific contrastive class frequencies"? Only had time to read the intro, so maybe this isn't the best term, but the current wording is confusing.}

\nms{IMO this paragraph is too long and/or the detail is too early in the intro, and pushes the description of your contribution to be too late. Would you consider starting off by just stating that you're working in the setting of image-text, and then at the end of the Intro, after you've made your contributions clear, explain that image-text learning is a very general setup and so your method could be helpful for a variety of tasks including the ones you described here?}

Self-supervised representation learning on paired image-text data uses text as ``labels'', requiring no further annotations beyond what has been routinely documented \citep{radfordLearningTransferableVisual2021}. By leveraging natural language to reference visual concepts and vice versa, the resulting image-text models trained using self-supervised objectives can perform a diverse set of vision-language tasks \citep{radfordLearningTransferableVisual2021, zhangContrastiveLearningMedical2022}.

When applied to the medical domain, image-text models can (i) retroactively label images to select relevant patients for a clinical trial, (ii) help physicians verify the accuracy of a report by noting whether the referred location (i.e., visual grounding of the text) is consistent with their impression of the image, and (iii) enable informed interpretation of medical images by retrieving similar patients from a database. Moreover, the abundance of paired image-text data (e.g., radiographs and radiology reports, histology slides and pathology reports) suggests the broad applicability of self-supervision using image-text data to improve healthcare.

In this paper, we focus on contrastive learning, a self-supervised approach that encourages the representations of semantically similar or \textit{positive} pairs of data to be close, and those of dissimilar or \textit{negative} pairs to be distant. Contrastive learning has been applied in the medical domain, demonstrating impressive transfer capabilities on a diverse set of downstream tasks \citep{chauhanJointModelingChest2020,huangGLoRIAMultimodalGlobalLocal2021,liaoMultimodalRepresentationLearning2021,mullerJointLearningLocalized2022,zhangContrastiveLearningMedical2022,boeckingMakingMostText2022,wangUsingMultipleInstance2023,bannurLearningExploitTemporal2023}. The biggest improvements come from addressing challenges unique to this domain. Examples include using cross-attention to localize areas of interest to handle the lack of effective pathology detectors~\citep{huangGLoRIAMultimodalGlobalLocal2021}, fine-tuning language models on medical corpora to address linguistic challenges in clinical notes~\citep{boeckingMakingMostText2022}. In a similar spirit, our work aims to address the nonuniform class distribution typical of healthcare data. 

\nms{I think the argument that "the biggest improvements come from addressing challenges unique to this domain" and the other examples aren't clarifying anything and are cluttering the intro; would remove.}
\wpq{response: I made the argument shorter.}

\begin{figure}[t] 
    \centering  
    \includegraphics[width=\textwidth]{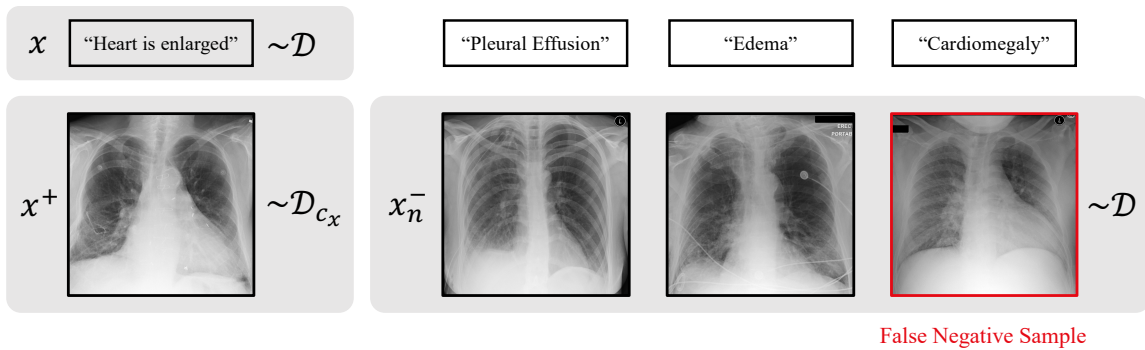} 
    \caption{False negatives in paired image-text data. Drawing negative image samples $x_n^-$ from the data distribution $\sD$ may result in samples that are semantically similar to the text $x$ (``Heart is enlarged" and ``Cardiomegaly" imply the same pathology). False negative samples occur at an uneven rate (e.g., depending on the pathology type) and degrade the performance of image-text models on downstream tasks.}
    \label{fig:teaser_figure}  
\end{figure}

Using text as ``labels'' induces a nonuniform class distribution with large support. Specifically, natural language descriptions of medical images can represent a vast number of possible classes. Most descriptions belong to a few common classes (e.g., cardiomegaly) while the remaining descriptions are spread across many rare classes (e.g., left apical pneumothorax). In our problem of chest X-ray representation learning, each CheXpert label \citep{irvinCheXpertLargeChest2019} representing common classes in chest radiographs is mentioned in $10\text{-}25\%$ of the radiology reports in MIMIC-CXR, a large collection of chest X-ray images and radiology reports \citep{johnsonMIMICCXRDeidentifiedPublicly2019a}. At the same time, the rare classes may be more descriptive and associated with high risk, which makes their accurate identification important in clinical applications. Our goal is to handle the highly nonuniform class distribution that presents a challenge for many existing contrastive learning approaches.

\nms{I don't understand the connection with the previous sentence -- does this mean 75-90\% of reports correspond to rare classes? If so, it's not true that "most descriptions belong to only a small number of common classes"?}
\wpq{response: I will make it clear that 10-25 percent refers to each label.}

When training an image-text model using contrastive learning, each text is positively paired with the associated image from the same imaging event and negatively paired with a batch of images uniformly drawn from the training data set~\citep{liVisualBERTSimplePerformant2019,luViLBERTPretrainingTaskAgnostic2019,chenUNITERUNiversalImageTExt2020a,radfordLearningTransferableVisual2021,liaoMultimodalRepresentationLearning2021,zhangContrastiveLearningMedical2022,wangUsingMultipleInstance2023}. If a negatively paired image is semantically similar to the text, it is considered as a \textit{false negative}~\citep{saunshiTheoreticalAnalysisContrastive2019}, as illustrated in Figure~\ref{fig:teaser_figure}. It has been shown previously that false negatives cause a substantial decline in downstream classification performance when using image representations trained with contrastive learning \citep{chuangDebiasedContrastiveLearning2020}.

One approach to alleviating the problem of false negatives is to
identify false negative pairs explicitly and reduce their effect. In some applications, ground truth class labels are available and can be used to ensure that no false negative pairs are generated~\citep{khoslaSupervisedContrastiveLearning2020,dwibediLittleHelpMy2021}. Unfortunately, deriving categorical labels from text is challenging because it can be difficult to determine the appropriate level of class granularity and ensure sufficient class coverage when the class distribution is nonuniform with large support. Label imputation with nearest neighbor methods~\citep{zhengWeaklySupervisedContrastive2021} or clustering~\citep{chenIncrementalFalseNegative2022} requires non-trivial implementation and additional computational cost. 
In the absence of class labels, a common approach is to treat negative samples whose embeddings are close to positive samples as false negatives and to eliminate them~\citep{huynhBoostingContrastiveSelfSupervised2022, zhangMedicalSymptomDetection2022,zhouDebiasedContrastiveLearning2022,wangImprovingMolecularContrastive2022}. This approach is easy to implement but it overlooks valuable information captured by text if only image embedding is used. Moreover, rejecting negative samples whose embeddings are close to positive samples runs risk of removing valuable ``hard negatives'', i.e., visually similar true negative samples.

Alternatively, debiased contrastive learning uses (possibly) extra positive samples to offset the influence of false negatives without explicitly identifying them~\citep{chuangDebiasedContrastiveLearning2020}. It assumes a uniform class distribution and applies a constant correction to each sample. This strategy can be suboptimal for healthcare data where the class distribution is nonuniform. We observe that naively applying debiased contrastive learning introduces a performance trade-off between coarse-grained tasks (e.g., classifying pneumonia) and fine-grained tasks (e.g., cross-modal retrieval of rare classes), as seen in Table~\ref{fig:mlhc_dcl_fixtaup_tradeoff}.

\wpq{I feel in this paragraph, the goal is to say we solved the FN problem when underlying class is nonuniform and we want to advocate to use the correct sample-specific class probability then show how we can estimate the correction strength in image-text case. The contribution sounds better/more general. Currently these points are not conveyed. according to polina, this is because using the correct probability is obvious and doesnt constitute the major contribution of the paper, which makes sense I guess.}

Our method aims to reduce the effect of false negatives in contrastive representation learning while making no assumption on the underlying class distribution. Specifically, we estimate the class probability (or level of correction) for each data point based on the likelihood of the text provided by a language model.
Our method (i) does not require or attempt to infer class labels, (ii) requires a few extra lines of code to implement when compared with contrastive objectives typically used in practice, (iii) incurs minimal computational overhead,
and (iv) takes advantage of the class information implicitly represented by the text ``labels'' to adaptively mitigate the problem of false negatives in image-text contrastive learning.
We study the advantages of using sample-specific class probability estimates on a small-scale image data set constructed with a nonuniform class distribution. We evaluate our approach on a large set of chest X-ray images and associated radiology reports~\citep{johnsonMIMICCXRDeidentifiedPublicly2019a}, demonstrating superior performance on image classification, visual grounding, and cross-modal retrieval tasks.

\subsection*{Generalizable Insights about Machine Learning in the Context of Healthcare}

Our work (i) provides direct value to those interested in using self-supervised representation learning for healthcare applications, (ii) highlights the importance of considering the distinct characteristics of healthcare data that can pose challenges for methods designed for simpler scenarios (in our case, false negatives degrade the performance of image-text models trained on clinical data), and (iii) shows that a language model provides a useful prior that can be employed in other image-text modeling problems (e.g., brain tumor CT scans with imaging reports) or inspire future research to solve related problems in other data modalities.

\section{Methods}

In this section, we introduce notation and provide a brief overview of debiased contrastive learning \citep{chuangDebiasedContrastiveLearning2020}, followed by the derivation of our method to compensate for potential false negative samples and analysis of the relationship between our approach and the original formulation.

\subsection{Notation and Problem Setup}
\label{sec:problem_setup}

Let $\sX$ be the set of all possible data points. In our application, this includes both images and text. Contrastive learning assumes access to similar (positive) data pairs $(x,x^+)$ and $N$ i.i.d. negative samples $\pc{x_n^-}$ that are presumably unrelated to $x$. We use set $\sC$ of discrete latent classes to formalize the notion of semantic similarity, e.g., similar data $(x,x^+)$ have the same latent class. Let $\rho$ be the distribution over the latent class set $\sC$. We use $\sD_c$ to denote the probability distribution over $\sX$ that captures the likelihood of a data point $x$ belonging to a class $c\in\sC$. The data distribution $\sD$ is simply the marginal distribution, i.e., $\sD(x) \triangleq \sum_{c\in\sC} \rho(c) \sD_c(x)$. For convenience, we use $c_x$ to denote the latent class of $x\in\sX$. In practice, the class label $c_x$ is unknown.

The positive data pair $(x,x^+)\sim \sD_{\text{sim}}$ belongs to the same class by construction. This is achieved for example by (i) applying class-preserving data augmentation to generate an image $x^+$ from another image $x$ ($\sD_{\text{sim}}$ is a distribution of image and its data augmentations) or (ii) treating $x^+$ as the image associated with text $x$ ($\sD_{\text{sim}}$ is a distribution of image-text pairs). We define $\sD_{\text{sim}}(x,x^+) \triangleq \sD(x)\sD_{c_x}(x^+)$ to capture the constraint that $x^+$ is generated from the same latent class $c_x$ as $x$.

Negative samples $\pc{x_n^-}$ should be drawn from semantically dissimilar (w.r.t. $x$) data 
\begin{align}
    \sE_{c_x}(x')
        \triangleq p(x' \mid c\neq c_x)
        = \sum_{c\neq c_x} \frac{\rho(c)}{1-\rho(c_x)} \sD_c(x'),
\end{align}
which is infeasible since we do not have access to class labels $c_x$. In practice, negative samples $\pc{x_n^-}$ are typically drawn from the marginal $\sD$. Doing so introduces false negative samples. Specifically, we observe that sampling from $\sD$ is equivalent to sampling from a mixture of semantically similar data $\sD_{c_x}$ and semantically dissimilar data $\sE_{c_x}$, i.e., 
\begin{align}
    \sD(x') = \sum_{c\in\sC} \rho(c) \sD_c(x') =  \rho(c_x) \sD_{c_x}(x') + (1-\rho(c_x)) \sE_{c_x}(x').
    \label{eq:data_marginal_decompose_to_mixture}
\end{align}
With probability $\rho(c_x)$, $x_n^-\sim\sD$ is a false negative that is drawn from the component $\sD_{c_x}$. 

In many computer vision data sets, the number of classes is large and the false negative rate $\rho(c_x)$ is likely small for any data point $x\in\sX$. Thus, using $\sD$ instead of $\sE_{c_x}$ to sample negative examples is reasonable. Moreover, it is natural to assume that all classes are equally (un)likely. Unfortunately, neither assumption is true in the healthcare setting. In clinical image data sets, the probability of encountering common pathologies in a randomly chosen image is not negligible. Furthermore, common pathologies appear much more frequently than rare ones. Debiased contrastive learning~\citep{chuangDebiasedContrastiveLearning2020} addresses the former problem by modifying the optimization function to explicitly account for the chance of false negatives uniformly across data points.

\subsection{Debiased Contrastive Learning} 
\label{sec:dcl}

Contrastive learning aims to find a good representation function $f:\sX\to\sS^{d-1}(\gamma)$ that encodes data in $\sX$ to a hypersphere of radius $\gamma$ where $\sS^{d-1}(\gamma) \triangleq \pc{x\in\R^D \mid \norm{x}_2 = \gamma}$. Specifically, the contrastive learning objective forces representations of positive pairs to be closer than those of negative pairs~\citep{oordRepresentationLearningContrastive2018} by minimizing
\begin{align}
    \E_{ 
        (x,x^+)   \sim  \sD_{\text{sim}},
        \{x_n^-\} \sim  \sD^N 
    } \pb{
            -\log \frac{
                e^{s(x,x^+)}
            }{
                e^{s(x,x^+)} + \sum_{n=1}^N e^{s(x,x_n^-)}
            }
    },
    \label{eq:loss_contrasive_vanilla}
\end{align}
where the bounded function $s\colon\sX\times\sX\to\R$ measures the similarity (e.g., a dot-product) between learned representations captured by the encoder function $f$. Negative samples are uniformly drawn from the data $\sD$, which poses a risk of sampling false negatives.

Re-arranging Equation~\ref{eq:data_marginal_decompose_to_mixture}, true negative sample distribution $\sE_{c_x}$ can be expressed in terms of $\sD$ and $\sD_{c_x}$ from which we can readily sample:
\begin{align}
    \sE_{c_x}(x')
        = \frac{1}{1-\rho(c_x)} \sD(x') - \frac{\rho(c_x)}{1-\rho(c_x)} \sD_{c_x}(x').
\end{align}
The asymptotic debiased contrastive learning objective~\citep{chuangDebiasedContrastiveLearning2020} considers the case where the number $N$ of negative samples drawn from $\sE_{c_x}$ goes to infinity:
\begin{align}
    \E_{(x,x^+)\sim\sD_{\text{sim}}} \pb{
        -\log \frac{
            e^{s(x,x^+)}
        }{
            e^{s(x,x^+)} + N\,\E_{x^-\sim\sE_{c_x}}\pb{ e^{s(x,x^-)} }
        }
    }.
\label{eq:loss_dcl_asymptotic}
\end{align}
Given $N$ samples $\pc{u_n}$ from $\sD$ and $M$ samples $\pc{v_m}$ from $\sD_{c_x}$, the expected value in the denominator of Equation~\ref{eq:loss_dcl_asymptotic} can be estimated with 
\begin{align}
    g(x, \pc{u_n}, \pc{v_m};\eta) 
        &\triangleq 
            \frac{1}{1-\eta } 
            \left[\frac{1}{N} \sum_{n=1}^N e^{s(x,u_n)}\right]
            - 
            \frac{\eta}{1-\eta } 
            \left[\frac{1}{M}  \sum_{m=1}^M e^{(x,v_m)} \right].
\label{eq:expectation-est} 
\end{align}
When the class distribution is uniform and $\rho(c_x)=\eta$ for all $x\in\sX$, the estimator is consistent, i.e., $g(x, \pc{u_n}, \pc{v_m};\eta) \overset{p}{\to} \E_{x^-\sim\sE_{c_x}}[ e^{s(x,x^-)} ]$ as $N, M\to\infty$. In debiased contrastive learning \citep{chuangDebiasedContrastiveLearning2020}, $\eta$ is treated as a hyperparameter and is assumed to be constant. To avoid numerical issues, the estimator $g(x, \pc{u_n}, \pc{v_m}; \eta)$ is lower bounded by its theoretical minimum $e^{-\gamma^2}$~\citep{chuangDebiasedContrastiveLearning2020}. The resulting debiased contrastive loss
\begin{align}
    \sL
    \triangleq 
        \E_{ (x,x^+)\sim \sD_{\text{sim}}, \{u_n\}\sim \sD^N, \{v_m\}\sim \sD_{c_x}^M }
        \pb{
        -\log \frac{
            e^{s(x,x^+)}
        }{
            e^{s(x,x^+)} + Ng(x, \pc{u_n}, \pc{v_m};\eta)
        }
    }
    \label{eq:loss_dcl}
\end{align}
reweights the positive and negative terms in the denominator.

\subsection{Sample-specific Class Probability Function $\eta(\cdot)$}
\label{sec:sample_specific_probability_function}

The formulation above uses a single hyperparameter~$\eta$ to correct for false negatives uniformly for every data point. We propose to employ an estimate of the class probability $\rho(c_x)$ with a sample-specific class probability function $\eta(\cdot)$, by replacing $\eta$ with $\eta(\cdot)$ in Equation~\ref{eq:expectation-est}:
\begin{align}
    g(x, \pc{u_n}, \pc{v_m};\eta) 
        \triangleq 
            \frac{1}{1-\eta(x) } 
            \left[\frac{1}{N} \sum_{n=1}^N e^{s(x,u_n)}\right]
            - 
            \frac{\eta(x)}{1-\eta(x) } 
            \left[\frac{1}{M}  \sum_{m=1}^M e^{(x,v_m)} \right]
\label{eq:expectation-est-sample-specific}      
\end{align}
and use the objective function defined in Equation~\ref{eq:loss_dcl} with $g(\cdot)$ in Equation~\ref{eq:expectation-est-sample-specific}. In practice, $\eta(\cdot)$ may be an imperfect estimator for $\rho(c_x)$. The following proposition informs us how the quality of the estimator $g(\cdot)$ affects the approximation error.

\begin{proposition}
    \label{prop:dcl_estimation_error}
    Let $f$ and $\eta$ be arbitrary functions, $N$ and $M$ be finite. Then, 
    \begin{align}
        \label{eq:dcl_estimation_error_line1}
        \left|
            \sL - \overline{\sL}
        \right|
        &\leq
            \frac{3e^2\sqrt{\pi/2}}{\sqrt{N}} \, 
             \E_{x\sim\sD}\pb{\frac{1}{1-\rho(c_x)}}
            + \frac{3e^2\sqrt{\pi/2}}{\sqrt{M}} \, \E_{x\sim\sD}\pb{ \frac{\rho(c_x)}{1-\rho(c_x)} } \\
        &\quad+ 3e^2 \E_{x\sim\sD}\pb{ \left| \frac{1}{1-\eta(x)} - \frac{1}{1-\rho(c_x)} \right| },
        \label{eq:dcl_estimation_error_line2} \\ 
        &\text{where}\quad
        \overline{\sL}
            \triangleq \E_{(x,x^+)\sim\sD_{\text{sim}}} \pb{
            -\log \frac{
                e^{s(x,x^+)}
            }{
                e^{s(x,x^+)} + N\,\E_{x^-\sim\sE_{c_x}}\pb{ e^{s(x,x^-)} }
            }
        }.
    \end{align}
\end{proposition}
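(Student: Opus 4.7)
The plan is to decompose the error via an intermediate loss and bound the two pieces separately. Define $\sL_\rho$ identically to $\sL$ in Equation~\ref{eq:loss_dcl} except that the sample-specific function $\eta(x)$ is replaced by the true (but unknown) class prior $\rho(c_x)$ inside the estimator $g$. By the triangle inequality
\begin{align*}
\left|\tilde{\sL} - \sL\right| \;\leq\; \left|\tilde{\sL} - \sL_\rho\right| \;+\; \left|\sL_\rho - \sL\right|,
\end{align*}
and the two pieces align with lines~\ref{eq:dcl_estimation_error_line1} and \ref{eq:dcl_estimation_error_line2}: the first is pure finite-sample error in the oracle setting $\eta=\rho$, and the second is the systematic bias from using $\eta(x)\neq\rho(c_x)$.

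\textbf{Lipschitz reduction and finite-sample term.} Because $f$ maps into $S^{d-1}(\gamma)$ with $\gamma=1$ absorbed into the stated constants, $s(x,y)\in[-1,1]$ and $e^{s(x,y)}\in[e^{-1},e]$. Combined with the clipping of $g$ at $e^{-1}$, every denominator appearing inside a logarithm lies in $[(N{+}1)e^{-1},(N{+}1)e]$, so $-\log$ is Lipschitz with constant $e/(N{+}1)$ there; since the numerators $e^{s(x,x^+)}$ cancel and an extra factor of $N$ multiplies $g$, the net Lipschitz constant with respect to the estimator is $e$. Hence $|\tilde{\sL}-\sL_\rho|\leq e\cdot\E[|g_\rho - \E[g_\rho\mid x]|]$; by Equation~\ref{eq:data_marginal_decompose_to_mixture} the conditional mean of $g_\rho$ equals $\E_{x^-\sim\sE_{c_x}}[e^{s(x,x^-)}]$, so splitting $g_\rho$ via the triangle inequality reduces the task to bounding $\E|\hat A - \E\hat A|$ and $\E|\hat B - \E\hat B|$ (where $\hat A=\tfrac1N\sum_n e^{s(x,u_n)}$ and $\hat B=\tfrac1M\sum_m e^{s(x,v_m)}$) weighted by $1/(1-\rho(c_x))$ and $\rho(c_x)/(1-\rho(c_x))$ respectively. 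Applying the standard sub-Gaussian-style bound $\E|\bar Z_N - \E Z|\leq \tfrac{b-a}{2}\sqrt{\pi/(2N)}$ with $b-a\leq e-e^{-1}$ to each piece yields the two terms on line~\ref{eq:dcl_estimation_error_line1} with the advertised constants.

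\textbf{Bias term.} For $|\sL_\rho - \sL|$, the key algebraic observation is that, since $\tfrac{\rho}{1-\rho}=\tfrac{1}{1-\rho}-1$ and analogously for $\eta$, the difference of estimators telescopes to
\begin{align*}
g_\rho(x,\{u_n\},\{v_m\}) - g_\eta(x,\{u_n\},\{v_m\}) \;=\; \left(\tfrac{1}{1-\rho(c_x)} - \tfrac{1}{1-\eta(x)}\right)\bigl(\hat A - \hat B\bigr).
\end{align*}
Because $\hat A,\hat B\in[e^{-1},e]$ deterministically, $|\hat A-\hat B|\leq 2e$; propagating through the same Lipschitz-of-$\log$ step and taking expectations gives $|\sL_\rho - \sL|\leq 2e^2\,\E\bigl[\,|\tfrac{1}{1-\eta(x)} - \tfrac{1}{1-\rho(c_x)}|\,\bigr]$, which is exactly line~\ref{eq:dcl_estimation_error_line2}.

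\textbf{Main obstacle.} The most delicate issue is the interaction of the Lipschitz-of-$\log$ step with the $e^{-1}$ clipping of $g$: without clipping, a badly miscalibrated $\eta(x)$ could drive the unclipped $g_\eta$ below zero, invalidating the lower bound on the denominator; it is precisely the clipping that keeps every denominator in the well-behaved interval. I would verify that clipping is a contraction toward $[e^{-1},\infty)$ and hence can only shrink $|g_\rho - g_\eta|$ pointwise (noting $g_\rho\geq e^{-1}$ under clipping as well), so that the telescoping identity above continues to upper-bound the clipped difference. The remainder is careful bookkeeping of constants — the $\sqrt{\pi/2}$ factor in line~\ref{eq:dcl_estimation_error_line1} comes from applying $\E|\bar Z_N-\E Z|\leq \sigma\sqrt{\pi/(2N)}$ to $Z=e^{s(x,\cdot)}$, while the prefactors $3e^2$ and $2e^2$ collect the Lipschitz factor, the range of $e^{s}$, and the two contributions to $\hat A$ (data sampled from $\sD$ enters both the positive and implicit correction terms).
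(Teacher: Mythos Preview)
Your approach is correct and reaches the same conclusion, but by a genuinely different route than the paper. The paper fixes $x,x^+$, bounds the tail probability $\Prob(\triangle\ge\varepsilon)$ by first using $\log t\le t-1$ to pass from the log-ratio to $|g-\E_{\sE_{c_x}}|\ge \varepsilon e^{-1}$, then splits this event into three pieces (the $\hat A$ deviation, the $\hat B$ deviation, and the $\eta$--$\rho$ mismatch) via a union bound at level $\varepsilon/3$, applies Hoeffding to the first two and a deterministic indicator to the third, and finally recovers the expectation by integrating the sub-Gaussian tails with $\int_0^\infty e^{-cz^2}\,dz=\tfrac12\sqrt{\pi/c}$. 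Your argument stays entirely in $L^1$: you exploit the $e^{-1}$ clipping to get a uniform lower bound on the denominator, turn $\log$ into a Lipschitz map with constant $e$, insert the oracle loss $\sL_\rho$, and then bound $\E|\hat A-\E\hat A|$, $\E|\hat B-\E\hat B|$ directly. Both proofs rest on the identical three-term algebraic identity (your telescoping formula for $g_\rho-g_\eta$ is precisely the paper's third term), so the substantive difference is tail-integration versus direct moment bounds; your route is shorter and in fact avoids the factor~$3$ coming from the union bound.

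A few small imprecisions worth tightening: (i) the upper endpoint $(N{+}1)e$ you claim for the denominator is false in general, since for $\eta(x)$ close to~$1$ the unclipped $g_0$ can be arbitrarily large and clipping only bounds it from below --- but you only need the lower endpoint for the Lipschitz constant, so this is harmless; (ii) integrating the two-sided Hoeffding tail gives $\E|\bar Z_N-\E Z|\le (b-a)\sqrt{\pi/(2N)}$, not $\tfrac{b-a}{2}\sqrt{\pi/(2N)}$ --- the stated constants still hold because $e(e-e^{-1})<3e^2$ with room to spare; (iii) the conditional mean of the \emph{clipped} $g_\rho$ is not exactly $\E_{\sE_{c_x}}[e^{s(x,x^-)}]$ --- what you really use (and correctly note later) is that clipping toward $[e^{-1},\infty)$ is a contraction relative to any target $\ge e^{-1}$, so $|g_\rho-\E_{\sE_{c_x}}|\le|g_{0,\rho}-\E_{\sE_{c_x}}|$, which is what the decomposition needs.
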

Proof is provided in Appendix~\ref{sec:dcl_estimation_error_proof}. 

Proposition~\ref{prop:dcl_estimation_error} bounds the approximation error due to (i) finite sample approximation in Equation~\ref{eq:expectation-est} (first two terms $\sO(\tfrac{1}{\sqrt{N}} + \tfrac{1}{\sqrt{M}})$) and (ii) misspecification of the sample-specific class probability $\eta$ (last term). When $\rho$ is uniform and $\eta(x) = \rho(c_x)$ is a constant, Proposition~\ref{prop:dcl_estimation_error} reduces to the result in \citet{chuangDebiasedContrastiveLearning2020}, up to constant factors. Assuming access to $c_x$ and using the correct class distribution $\rho$, i.e., $\eta(x)=\rho(c_x)$, yields a tighter error bound since the last term in the approximation error bound (Equation~\ref{eq:dcl_estimation_error_line2}) vanishes. Therefore, we aim to use sample-specific class probability function $\eta(x)$ that closely matches the true sample-specific class probability $\rho(c_x)$. 

In Appendix~\ref{appendix:generalization_bounds}, we show classification generalization bounds on representations trained using debiased contrastive loss with the sample-specific probability function $\eta(\cdot)$ in Equation~\ref{eq:expectation-est-sample-specific} for arbitrary class distribution.

\subsection{Language Model Estimate of Class Probability}
\label{sec:lm_estimate_of_eta}

We employ the likelihood of the text $p_{\text{LM}}(x)$ provided by a language model (LM) for text~$x$ to construct the estimate $\eta(x)$ of the class probability $\rho(c_x)$. Language models naturally provide estimates of token sequence probabilities. The estimates get better if the language model is fine-tuned on the data $\sD$. We assume a log-linear relationship between text and class probabilities, i.e., $\eta_{\text{LM}}(x) = a \cdot p_{\text{LM}}(x)^k$ where $a$ and $k$ are hyperparameters. Figure~\ref{fig:pseudocode} provides pseudocode for our proposed method, which requires a slight modification of the contrastive learning algorithm and incurs minimal computational overhead.

\section{Experiments}

We evaluate the advantages of estimating sample-specific class probability for contrastive learning in two different experiments.

\subsection{CIFAR10}

In this experiment, we evaluate image-only representations on data sets with a controlled class distribution. We do not attempt to estimate the class probabilities since no text information is available and would not be helpful since we control the frequency with which each class is included in the data set. Instead, we use the true class distribution during training and examine the effect of class distribution of the data set on the resulting representations.

\paragraph{Data}
For each $r\in\{0.05,0.1,0.25,0.5,0.75,0.9\}$, we generate a CIFAR10-$r$ subset of the CIFAR10 data set \citep{krizhevskyLearningMultipleLayers2009} as follows. The original data set includes 6,000 images for each class. For each of $5$ selected classes (dog, frog, horse, ship, truck), we randomly draw and include $r$ fraction of the images. We keep all images for each of the remaining $5$ classes. The CIFAR10-$r$ data set has a class probability of $\eta_{\text{Low}} = 0.2 r/(1+r)$ for each of the 5 selected classes and $\eta_{\text{High}} = 0.2 /(1+r)$ for the remaining classes. Larger values of $r$ lead to a more uniform class distribution. 

\paragraph{Representation Learning}
Similar to \citet{chuangDebiasedContrastiveLearning2020}, we use SimCLR \citep{chenIncrementalFalseNegative2022} for contrastive learning of image representations. We employ ResNet-18 \citep{heDeepResidualLearning2016} as the image encoder $f$ and dot-product as the similarity function $s$. The encoder is followed by a $2$-layer perceptron to create a $128$-dimensional embedding. For a reference image $x$, we employ data augmentation to generate the positive sample $x^+$. We set $v_1=x^+$ ($M=1$) and draw negative samples $\pc{u_n}$ and $\pc{x_n^-}$ randomly from the data set ($N=254$). We set $\gamma=\sqrt{2}$. We use the Adam optimizer \citep{kingmaAdamMethodStochastic2014} with a learning rate of $10^{-3}$ and weight decay of $10^{-6}$. Each model is trained for $300$ epochs. 

\paragraph{Contrastive Loss Variants}
We learn image representations using four different types of contrastive objectives: (i) baseline contrastive learning without correction for false negatives (CL), (ii) debiased contrastive learning that uses a constant $\eta_{\text{Low}}$ providing the true class probabilities for the $5$ subsampled classes (DCL-$\eta_{\text{Low}}$) and misspecified class probabilities for the remaining classes (iii) debiased contrastive learning that uses a constant $\eta_{\text{High}}$ providing misspecified class probabilities for the subsampled classes and the true class probability for the remaining 5 classes (DCL-$\eta_{\text{High}}$), and (iv) debiased contrastive learning with the true sample-specific class probability function for all samples (DCL-$\eta_{\text{True}}$). 

\paragraph{Image Classification}
We evaluate the quality of image representations with linear classification. We train a linear classifier with the cross-entropy loss and a fixed pretrained image encoder. We use the Adam optimizer with a learning rate of $10^{-3}$ and weight decay of $10^{-6}$. Each model is trained for $100$ epochs. We report classification accuracy with varying number of annotated examples available for training the classifier.

\begin{figure}
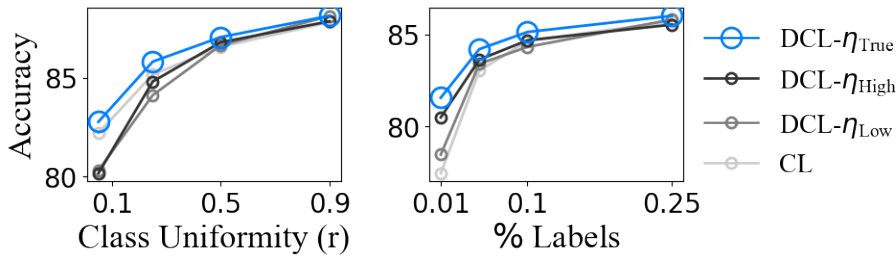

    \centering  
    \ifhighresfig
        \includegraphics[width=.8\textwidth]{mlhc_cifar10_quantitative_results_dpi=400} 
    \else
        \includegraphics[width=.8\textwidth]{mlhc_cifar10_quantitative_results} 
    \fi
    \caption{Evaluation using downstream classification task for CIFAR10 experiments. Left plot reports classification accuracy as a function of class distribution uniformity $r$. Right plot shows classification accuracy as a function of the fraction of images used for training the classifier on the CIFAR10-$0.5$ data set. Debiased contrastive learning with true class probabilities (DCL-$\eta_{\text{True}}$) consistently outperforms baseline contrastive learning (CL) and debiased contrastive learning with misspecified class probabilities (DCL-$\eta_{\text{Low}}$ and DCL-$\eta_{\text{High}}$). The effect is more pronounced when fewer labels are used for training the classifier.}
    \label{fig:cifar10_quantitative}  
\end{figure}

\paragraph{Results}
Figure~\ref{fig:cifar10_quantitative} reports the effect of the choice of sample-specific class probability function $\eta$ on classification accuracy. When the class distribution is nonuniform (i.e., smaller $r$), we observe that DCL-$\eta_{\text{True}}$ consistently outperforms CL, DCL-$\eta_{\text{Low}}$, and DCL-$\eta_{\text{High}}$. When the class distribution is close to uniform (i.e., $r$ is close to 1), all contrastive loss variants result in similar accuracy. 
Moreover, the gain from using the true sample-specific class probability is more pronounced when the classifier is trained with fewer labels. Figure~\ref{fig:cifar10_qualitative} provides t-SNE \citep{maatenVisualizingDataUsing2008} visualizations of the representations learned by contrastive and debiased contrastive objectives on CIFAR10-$0.1$. Using the true sample-specific class probability function $\eta_{\text{True}}$ leads to better class separation, especially for the subsampled classes.

\subsection{MIMIC-CXR}

We learn image and text encoders for frontal chest X-ray images and associated radiology reports respectively and evaluate the resulting representations in a set of downstream tasks.

\paragraph{Data} We use a subset of 234,073 frontal chest X-ray images and reports from MIMIC-CXR \citep{johnsonMIMICCXRDeidentifiedPublicly2019a}. 
We normalize the images and resize them to 512x512 resolution. We apply random image augmentations, i.e., 480x480 random crops, brightness and contrast variations. We use PySBD \citep{sadvilkarPySBDPragmaticSentence2020} for sentence tokenization. 
In all experiments, the data used for representation learning and downstream tasks are disjoint.

\paragraph{Representation Learning}

\begin{figure}[ht]
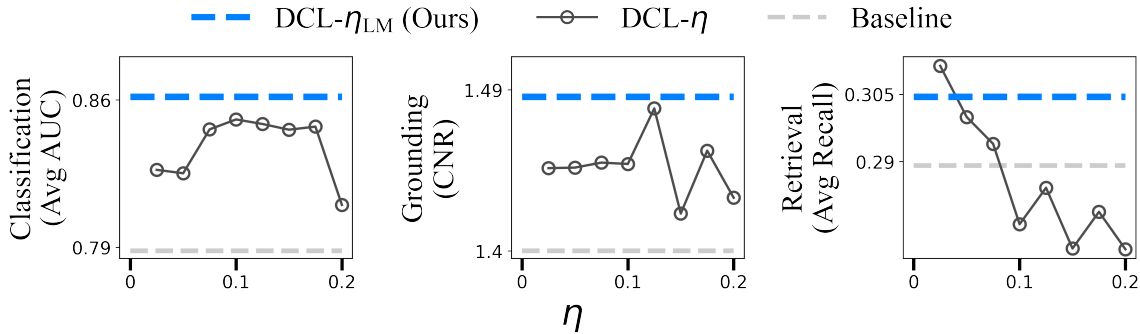

    \centering 
    \ifhighresfig
        \includegraphics[width=\textwidth]{mlhc_dcl_fixtaup_tradeoff_h=True_dpi=400}
    \else
        \includegraphics[width=\textwidth]{mlhc_dcl_fixtaup_tradeoff_h=True}
    \fi
    \caption{
        Our method (DCL-$\eta_{\text{LM}}$) outperforms debiased contrastive learning that applies a fixed amount of correction $\eta$ to all samples (DCL-$\eta$) and LSE$+$NL that does not correct for the false negatives (Baseline). For each value of $\eta$, an image-text model is trained and subsequently evaluated in all three downstream tasks. Our method achieves consistently better performance than alternative approaches. 
        }
    \label{fig:mlhc_dcl_fixtaup_tradeoff}
\end{figure}

We employ ResNet-18 \citep{heDeepResidualLearning2016} as the image encoder and CXR-BERT \citep{boeckingMakingMostText2022} as the sentence encoder. Each encoder is followed by a linear mapping to a $128$-dimension embedding space. We use LSE+NL \citep{wangUsingMultipleInstance2023} as the similarity function $s$. Given a reference text $x$, we assign $x^+$ to be the associated image. We set $v_1=x^+$ to avoid needing additional samples ($M=1$) and treat all unpaired images in a batch as $\pc{x_n^-}$ and $\pc{u_n}$. After a grid search, we set $a=0.2$ and $k=0.35$. We precompute $p_{\text{LM}}(x)$ for all sentences $x$ in the data set using CXR-BERT. Masked language models (e.g., CXR-BERT) cannot estimate sentence probabilities via the chain rule. Instead, we use pseudo-log-likelihood \citep{salazarMaskedLanguageModel2020} that scores a sentence~$x$ by adding the predicted log probabilities of every masked token as $\log p_{\text{LM}}(x)$. We use the AdamW optimizer \citep{loshchilovDecoupledWeightDecay2019} and decay the initial learning rate of $10^{-5}$ using a cosine schedule with 2k warmup steps. We initialize $\gamma$ to $\sqrt{14}$ and optimize this hyperparameter alongside the encoder parameters. We employ a batch size of 64.

\paragraph{Baseline Methods}
We compare our method (DCL-$\eta_{\text{LM}}$) with strong baselines BioViL \citep{boeckingMakingMostText2022} and LSE$+$NL \citep{wangUsingMultipleInstance2023} developed specifically for medical vision-language tasks. BioViL is an image-text model trained using symmetric contrastive learning and masked language modeling objective. LSE$+$NL is an image-text model that uses log-sum-exp and non-local aggregators for the similarity function $s$. Neither model corrects for the false negatives explicitly.

We also include in the evaluation several methods that explicitly identify false negatives based on the similarity measure between the positive sample $x^+$ and negative samples $x_n^-$, such as the intersection of their CheX5 labels (CheX5 Labels) or similarity of their text embeddings (Text Sim.). Negative samples that are too similar (i.e., above some threshold) are removed \citep{khoslaSupervisedContrastiveLearning2020,huynhBoostingContrastiveSelfSupervised2022,zhangMedicalSymptomDetection2022,zhouDebiasedContrastiveLearning2022}. Alternatively, we can reweight the negative samples or even reduce the set of negative samples based on their similarity score \citep{robinsonContrastiveLearningHard2021,wangImprovingMolecularContrastive2022}. We perform grid searches to select hyperparameters (e.g., the similarity threshold or the resampling size) and select the best model for each setup.

\begin{table}[ht]
    \centering
    \begin{tabular}{L{0.320\linewidth}R{0.075\linewidth}L{0.075\linewidth}R{0.075\linewidth}L{0.075\linewidth}R{0.075\linewidth}L{0.075\linewidth}p{0pt}}
    \toprule
    Method & \multicolumn{2}{c}{Classification} & \multicolumn{2}{c}{Grounding} & \multicolumn{2}{c}{Retrieval} &  \\
     & \scriptsize AUC$\uparrow$ & \scriptsize ACC$\uparrow$ & \scriptsize CNR$\uparrow$ & \scriptsize mIoU$\uparrow$ & \scriptsize Recall$\uparrow$ & \scriptsize MedR$\downarrow$ &  \\
    \midrule
    \parbox{1.1in}{No Correction} BioViL & 0.78 & 0.62 & 1.14 & 0.17 & 0.25 & 148 &  \\
    \parbox{1.1in}{No Correction} LSE$+$NL & 0.79 & 0.65 & 1.40 & 0.19 & 0.29 & 115 &  \\    \midrule
    \parbox{.7in}{Remove} by CheX5 Labels & \textBF{0.86} & 0.71 & 1.37 & 0.19 & 0.29 & 113 &  \\
    \parbox{.7in}{Resample} by Text Sim. & 0.82 & 0.71 & 1.37 & 0.19 & \textBF{0.30} & 111 &  \\
    \parbox{.7in}{Remove} by Text Sim. & 0.84 & \textBF{0.72} & 1.39 & 0.19 & \textBF{0.30} & 112 &  \\
    \parbox{.7in}{Reweight} by Text Sim. & 0.84 & 0.69 & 1.40 & 0.19 & 0.29 & 113 &  \\
    \midrule
    DCL-$\eta$ w/ $\eta=0.05$ & 0.80 & \textBF{0.72} & 1.46 & 0.19 & \textBF{0.30} & \textBF{104} &  \\
    DCL-$\eta$ w/ $\eta=0.1$ & 0.85 & \textBF{0.72} & 1.45 & 0.19 & 0.29 & 111 &  \\
    DCL-$\eta_{\text{LM}}$ (Ours) & \textBF{0.86} & \textBF{0.72} & \textBF{1.49} & \textBF{0.20} & \textBF{0.30} & \textBF{104} &  \\
    \bottomrule
    \end{tabular}
    \caption{Zero-shot performance of the learned representations on downstream image classification, visual grounding, and cross-modal retrieval tasks. Debiased contrastive learning with sample-specific class probability $\eta_{\text{LM}}$ (DCL-$\eta_{\text{LM}}$) outperforms state-of-the-art baseline methods BioViL and LSE$+$NL (no false negative correction) and alternative approaches to false negative correction. While methods that correct for false negatives by removing, resampling, or reweighting are effective in improving image classification results for commonly occurring classes, they do not yield comparable improvements in visual grounding or retrieval results.
    } 
    \label{tab:cmp_fn_methods}
\end{table}

In addition, we include the original debiased contrastive learning objective that uses a constant $\eta$ \citep{chuangDebiasedContrastiveLearning2020}.

\paragraph{Downstream Tasks}

We assess the zero-shot image classification performance of $5$ CheXpert labels (Cardiomegaly, Edema, Pleural Effusion, Pneumonia, Pneumothorax) on the MIMIC-CXR data set \citep{johnsonMIMICCXRDeidentifiedPublicly2019a} that we refer to as CheX5. There is roughly 1k images for each binary classification task. We first tokenize and encode class-specific text prompts (e.g., ``No signs of pneumonia.'' or ``Findings suggesting pneumonia.''). Table~\ref{tab:zs_cls_prompts} provides the prompts for each category. For every image, we assign a binary label that corresponds to the prompt with the higher image-sentence score. We report classification accuracy (ACC) and area under the curve (AUC).

We evaluate visual grounding performance using the MS-CXR region-sentence annotations \citep{boeckingMakingMostText2022}. This data set consists of 1,448 bounding boxes over 1,162 images, where each bounding box is associated with a sentence that describes its dominant radiological feature. We compute region-sentence scores to quantify how well the sentence is localized in the image. We report a measure of discrepancy between region-sentence scores inside and outside the bounding box, i.e., contrast-to-noise ratio (CNR) \citep{boeckingMakingMostText2022}, and how well the thresholded region-sentence scores overlap with the bounding box on average, i.e., mean intersection over union (mIoU). We use thresholds that span $[-1,1]$ in $0.05$ increments to compute the mIoU.

We evaluate cross-modal retrieval performance using the MS-CXR data set. To evaluate retrieval, we compute the bounding box features from the region features with RoIAlign \citep{heMaskRCNN2017}. We compute box-sentence scores and sort them to retrieve items in one modality given a query from the other modality. The correctly retrieved item is the one that is paired with the query item. We compute the fraction of times the correct item was found in the top $K$ results (R@K), the median rank of the correct item in the ranked list (MedR), and the average recall over $K=10,50,100$ and over both the image-to-text and the text-to-image retrieval direction (Recall).

\paragraph{Results}

Figure~\ref{fig:mlhc_dcl_fixtaup_tradeoff} illustrates performance trade-off for debiased contrastive learning that uses a constant class probability function $\eta$. Increasing the value of $\eta$ improves image classification but harms cross-modal retrieval performance. Conversely, decreasing the value of $\eta$ enhances cross-modal retrieval performance but harms image classification performance. DCL-$\eta_{\text{LM}}$  provides an overall superior solution on all three tasks.

Table~\ref{tab:cmp_fn_methods} reports the performance of DCL-$\eta_{\text{LM}}$ and competing image-text models. Training LSE$+$NL with DCL-$\eta_{\text{LM}}$ significantly improves its performance on these tasks compared to BioViL and LSE$+$NL. Using a sample-specific class probability function $\eta_{\text{LM}}$ is more effective than using a constant function $\eta=0.05$ and $0.1$. The methods that identify and remove, resample, or reweight the false negatives improve image classification performance but do not offer any  gain for visual grounding and minimal improvement for cross-modal retrieval. Tables~\ref{tab:sota_zs_cls}, \ref{tab:sota_grounding}, and \ref{tab:sota_retrieval} in Appendix~\ref{appendix:additional_results} provide additional statistics for each task.

\section{Discussion}

We introduced a novel sample-specific approach that corrects the effect of false negative samples on the contrastive objective. Consistent with prior work~\citep{chuangDebiasedContrastiveLearning2020}, this approach offers empirical advantages when the learned representation is used in downstream tasks. In addition, the sample-specific approach to correcting false negatives improves the performance over the original variant of debiased contrastive learning that applied the same correction for all data points.

Our experiments also demonstrate that reducing false negatives for tasks with varying levels of granularity are nuanced. In particular, the methods that attempt to remove, resample, or reweight likely false negative examples can improve image classification performance but offer minimal or no improvement for visual grounding and cross-modality retrieval tasks. We hypothesize that this is because the fine-grain classes that must be also handled for the latter two tasks make identifying false negative samples more error-prone. Similarly, the performance of representations learned via debiased contrastive learning with a fixed correction factor is sensitive to the value of the assumed class probability and varies substantially across the range of possible values. Moreover, the optimal choice of the correction factor varies across downstream tasks, making it challenging to train universally useful representations. To the best of our knowledge, our work is the first to identify the differences in performance of vision-language tasks due to choices in correcting for the effect of false negatives during representation learning. The proposed sample-specific approach applies adaptive correction and produces representations that achieve superior performance.

Using all CheX5 labels to remove false negatives consistently improves the performance of classifying the CheX5 classes. However, doing so has minimal impact on visual grounding and cross-modal retrieval tasks that require the ability to distinguish rare classes beyond those specified in CheX5. Methods that require a clear definition of the latent classes \citep{khoslaSupervisedContrastiveLearning2020,dwibediLittleHelpMy2021} are not easily adaptable to situations where there are many classes or when the classes are difficult to define concretely. In contrast, methods that implicitly define the latent classes via clustering \citep{chenIncrementalFalseNegative2022}, or do not assume access to the latent classes (\citet{chuangDebiasedContrastiveLearning2020} or our approach) show promise in improving model performance on vision-language tasks.

In our experiments, visual grounding and cross-modal retrieval tasks use the MS-CXR data set that contains infrequently occurring sentences. Thus, one would assume both tasks would benefit more from using smaller values of the assumed uniform class probability $\eta$ than image classification of commonly occurring pathologies. However, the observed performance improvement for grounding is less noticeable than for retrieval. One reason is that the overlap-based metrics (e.g., IoU) used to evaluate grounding may be less sensitive to the variations in text embeddings representing different but related classes. In contrast, to achieve a high value of ranking-based metrics (e.g., recall) requires the model to rank the correct sentence higher than closely related alternatives. This unexpected result suggests that in addition to the class distribution, target performance metrics that capture how the model will be used in the clinical application are also important when developing representation learning approaches.

Our work emphasizes the need to consider the unique characteristics of medical data that pose challenges for methods designed for simpler scenarios. When working with paired image-text data, we observe that false negatives occur at an uneven rate, making methods designed to address false negatives assuming a uniform class distribution less effective. Furthermore, our work shows the potential of using language models to develop more effective algorithms. Language models are versatile, i.e., capable of processing noisy language inputs, and can serve as useful priors.

\subsection*{Limitations}

While the theory presented in this paper is informative, it has limitations. For example, the generalization bounds provided in Appendix~\ref{appendix:generalization_bounds} assume a specific similarity function (e.g., dot product) that may not represent the various similarity functions used in practice.  Moreover, the generalization bounds only apply to downstream classification tasks while we also evaluate on visual grounding and cross-modal retrieval tasks. While these tasks can be interpreted as some form of nearest neighbor classification, the theoretical results do not trivially extend to these scenarios and more analysis is needed to provide similar generalization guarantees for these tasks.

We use a language model to score text as a proxy for the sample-specific class probability. However, it is unclear how to apply this strategy to data that does not include associated text. While we can estimate data density with certain types of models  (e.g., flow-based or autoregressive models), it is not well understood whether these estimates correlate well with the underlying class probabilities. Natural language data is unique in that it is created by humans, capturing important variations in data that align well with the types of problems that users typically wish to solve.

Moreover, it is uncertain how well our assumption that class probability is log-linear with respect to text sequence probability holds in practice. Verifying this assumption requires defining a concrete set of latent classes and annotating text with the defined classes. However, the latent classes are difficult to define in practice, and we do not have access to the mapping from a data point to its latent class in most clinically important problems as having access to such mapping would eliminate the need for self-supervised learning.

\section{Conclusion} 

We present a debiased contrastive learning framework that accommodates arbitrary class distributions and mitigates the impact of false negatives. We offer theoretical and empirical evidence for using accurate sample-specific class probabilities. To this end, we introduce a specific debiased contrastive objective that employs a language model to estimate the sentence likelihood as a proxy for the class probability. When applied to paired image-text data, our method outperforms strong image-text models on image classification, visual grounding, and cross-modal retrieval tasks.

\acks{This work was supported by NIH NIBIB NAC P41EB015902, Philips, Wistron, and MIT Lincoln Laboratory. We thank Neel Dey and Nalini Singh for proofreading and providing feedbacks on paper drafts.}

\newpage
\newpage
\bibliography{all}

\newpage
\appendix

\section{Theoretical Results}

\renewcommand{\thelemma}{A.\arabic{lemma}}

In this section. We use use $g_0$ to denote the estimator in Equation~\ref{eq:expectation-est-sample-specific}
\begin{align}
    g_0(x, \pc{u_n}, \pc{v_m}) 
        \triangleq 
            \frac{1}{1-\eta(x)}\pa{
                \frac{1}{N} \sum_{n=1}^N e^{s(x,u_n)}
                - \eta(x) \frac{1}{M} \sum_{m=1}^M e^{(x,v_m)}
            }
\end{align}
and $g(x, \pc{u_n}, \pc{v_m}) = \max(g_0(x, \pc{u_n}, \pc{v_m}), e^{-\gamma^2})$ as the version of estimator lower bounded by its theoretical minimum.

\subsection{Proof for Proposition~\ref{prop:dcl_estimation_error}}
\label{sec:dcl_estimation_error_proof}
\begin{proof}

The goal is to show how well debiased contrasive loss $\sL$ approximates the asymptotic debiased contrastive loss $\overline{\sL}$. Without loss of generality, we assume $ \gamma=1$. The proof holds as long as $s$ is bounded. For simplicity, we assume $s(x,x')=f(x)^Tf(x')$, implying $s(x)\in[-1,1]$.

Let $\varepsilon>0$ and $x,x^+\in\sX$ be arbitrary. We are interested in the tail probability of the difference between the integrands of $\sL$ and $\overline{\sL}$, i.e., $\Prob(\triangle \geq \varepsilon)$ where 
\begin{align*}
    \triangle
        = \left|
            -\log \frac{e^{s(x,x^+)}}{e^{s(x,x^+)} + Ng(x, \pc{u_n}, \pc{v_m})} 
            +\log \frac{e^{s(x,x^+)}}{e^{s(x,x^+)} + N\E_{x^-\sim\sE_{c_x}}\pb{e^{s(x,x^-)}}}
        \right|.
\end{align*}
Note, $\triangle$ implicitly depends on $x,x^+$. $\pc{u_n},\pc{v_m}$ are random samples. Now simplify:
\begin{align*}
    \Prob\pa{
        \triangle \geq \varepsilon
    }
        &= \Prob\pa{
            \left|
                \log\pa{e^{s(x,x^+)} + Ng(x,\pc{u_n},\pc{v_m})} - 
                \log\pa{e^{s(x,x^+)} + N\E_{x^-\sim\sE_{c_x}}\pb{e^{s(x,x^-)}}}
            \right| \geq \varepsilon
        } \\ 
        &= \Prob\pa{
            \log\pa{e^{s(x,x^+)} + Ng(x,\pc{u_n},\pc{v_m})} - 
            \log\pa{e^{s(x,x^+)} + N\E_{x^-\sim\sD_{c_x}}\pb{e^{s(x,x^-)}}}
            \geq \varepsilon
        } \\ 
        &\quad+ \Prob\pa{
            -\log\pa{e^{s(x,x^+)} + Ng(x,\pc{u_n},\pc{v_m})} + 
            \log\pa{e^{s(x,x^+)} + N\E_{x^-\sim\sD_{c_x}}\pb{e^{s(x,x^-)}}}
            \geq \varepsilon
        }.
\end{align*}
The first term can be bounded, i.e.,
\begin{align*}
    &\Prob\pa{
            \log\pa{e^{s(x,x^+)} + Ng(x,\pc{u_n},\pc{v_m})} - 
            \log\pa{e^{s(x,x^+)} + N\E_{x^-\sim\sE_{c_x}}\pb{e^{s(x,x^-)}}}
            \geq \varepsilon
        } \\ 
    \quad&=\Prob\pa{
        \log \frac{ e^{s(x,x^+)} + Ng(x,\pc{u_n}.\pc{v_m}) }{ e^{s(x,x^+)} + N\E_{x^-\sim\sE_{c_x}}\pb{e^{s(x,x^-)}} } \geq \varepsilon
    } \\ 
    \quad&\leq\Prob\pa{
        \frac{ Ng(x,\pc{u_n}, \pc{v_m}) - N\E_{x^-\sim\sE_{c_x}}\pb{e^{s(x,x^-)}} }{ e^{s(x,x^+)} + N\E_{x^-\sim\sE_{c_x}}\pb{e^{s(x,x^-)}}  } \geq \varepsilon
    } \tag{ $\log x \leq x-1$ for $x>0$ } \\ 
    \quad&=\Prob\pa{
        g(x,\pc{u_n}, \pc{v_m}) - \E_{x^-\sim\sE_{c_x}}\pb{e^{s(x,x^-)}}
        \geq \varepsilon \pa{\frac{1}{N}e^{s(x,x^+)} + \E_{x^-\sim\sE_{c_x}}\pb{e^{s(x,x^-)}} }
    } \\ 
    \quad&\leq\Prob\pa{
        g(x,\pc{u_n}, \pc{v_m}) - \E_{x^-\sim\sE_{c_x}}\pb{e^{s(x,x^-)}}
        \geq \varepsilon e^{-1}
    } \tag{ $\frac{1}{N}e^{s(x,x^+)} + \E_{x^-\sim\sE_{c_x}}\pb{e^{s(x,x^-)}} \geq \frac{1}{N}e^{-1} + e^{-1} \geq e^{-1} $ }.
\end{align*}
Similarly, the second term is bounded in a similar manner. Since both term can be bounded,
\begin{align*}
    \Prob\pa{
        \triangle \geq \varepsilon
    }
        &\leq \Prob\pa{
            \left|
                g(x,\pc{u_n}, \pc{v_m}) - \E_{x^-\sim\sE_{c_x}}\pb{e^{s(x,x^-)}}
            \right|
            \geq \varepsilon e^{-1}
        } \\
        &\leq \Prob\pa{
            \left|
                g_0(x,\pc{u_n}, \pc{v_m}) - \E_{x^-\sim\sE_{c_x}}\pb{e^{s(x,x^-)}}
            \right|
            \geq \varepsilon e^{-1}
        }. \tag{for any $b\geq e^{-1}$, $|\max(a,e^{-1})-b|\leq |a-b|$ } 
\end{align*}
We decompose the term inside the absolute value into 3 terms:
\begin{align*}
    &g_0(x,\pc{u_n}, \pc{v_m}) - \E_{x^-\sim\sE_{c_x}}\pb{e^{s(x,x^-)}} \\ 
        \quad&=\frac{1}{1-\eta(x)} \pa{ \frac{1}{N}\sum_{n=1}^N e^{s(x,u_n)} - \eta(x) \frac{1}{M}\sum_{m=1}^M e^{s(x,v_m)} } \\ 
        \quad&\quad- \frac{1}{1-\rho(c_x)} \pa{
            \E_{x^-\sim\sD}\pb{e^{s(x,x^-)}} - \rho(c_x) \E_{x^-\sim\sD_{c_x}}\pb{e^{s(x,x^-)}}
        } \\ 
        \quad&=
            \frac{1}{1-\rho(c_x)}\pa{
                \frac{1}{N}\sum_{n=1}^N e^{s(x,u_n)} - 
                \E_{x^-\sim\sD}\pb{e^{(x,x^-)}}
            } +\frac{\rho(c_x)}{1-\rho(c_x)} \pa{
                \frac{1}{M}\sum_{m=1}^M e^{s(x,v_m)} - 
                \E_{x^-\sim\sD_{c_x}}\pb{e^{s(x,x^-)}}
            } \\ 
        \quad&\quad+
            \pa{\frac{1}{1-\eta(x)} - \frac{1}{1-\rho(c_x)}} \pa{
                \frac{1}{N} \sum_{n=1}^N e^{s(x,u_n)} - \frac{1}{M} \sum_{m=1}^M e^{s(x,v_m)}
            }. 
\end{align*}
Continue where we left off,
\begin{align*}
    \Prob\pa{
        \triangle \geq \varepsilon
    }
    &\leq \Prob\pa{
        \left|
            \frac{1}{1-\rho(c_x)}\pa{
                \frac{1}{N}\sum_{n=1}^N e^{s(x,u_n)} - 
                \E_{x^-\sim\sD}\pb{e^{s(x,x^-)}}
            } 
        \right|
        \geq \frac{\varepsilon e^{-1}}{3}
    } \\ 
    &\quad+\Prob\pa{
        \left|
            \frac{\rho(c_x)}{1-\rho(c_x)} \pa{
                \frac{1}{M}\sum_{m=1}^M e^{s(x,v_m)} - 
                \E_{x^-\sim\sD_{c_x}}\pb{e^{s(x,x^-)}}
            }
        \right|
        \geq \frac{\varepsilon e^{-1}}{3}
    } \\ 
    &\quad+\Prob\pa{
        \left|
            \pa{\frac{1}{1-\eta(x)} - \frac{1}{1-\rho(c_x)}} \pa{
                \frac{1}{N} \sum_{n=1}^N e^{s(x,u_n)} - \frac{1}{M} \sum_{m=1}^M e^{s(x,v_m)}
            }
        \right|
        \geq \frac{\varepsilon e^{-1}}{3}
    }.
\end{align*}
Hoeffding's Inequality states that given independent bounded random variable $Z_1,\cdots,Z_n$ where $Z_i\in[a,b]$ for all $i$, then $\Prob(|\frac{1}{n}\sum_{i=1}^n Z_i - \E\pb{Z_i}| \geq t) \leq 2\exp(-2nt^2 / (b-a)^2)$. Since $e^{-1}\leq e^{s(x,x')}\leq e$ for all $x'\in\sX$, the first two terms can be bounded by Hoeffding's Inequality
\begin{align*}
    \Prob\pa{
        \left|
            \frac{1}{1-\rho(c_x)}\pa{
                \frac{1}{N}\sum_{n=1}^N e^{s(x,u_n)} - 
                \E_{x^-\sim\sD}\pb{e^{s(x,x^-)}}
            } 
        \right|
        \geq \frac{\varepsilon e^{-1}}{3}
    }
        &\leq 2 \exp\pa{
            - \frac{2N\varepsilon^2}{9e^4} \pa{1-\rho(c_x)}^2
        }, \\ 
    \Prob\pa{
        \left|
            \frac{\rho(c_x)}{1-\rho(c_x)} \pa{
                \frac{1}{M}\sum_{m=1}^M e^{s(x,v_m)} - 
                \E_{x^-\sim\sD_{c_x}}\pb{e^{s(x,x^-)}}
            }
        \right|
        \geq \frac{\varepsilon e^{-1}}{3}
    } 
        &\leq 2\exp\pa{
            - \frac{2M\varepsilon^2}{9e^4} \pa{\frac{1-\rho(c_x)}{\rho(c_x)}}^2
        }.
\end{align*}
The last term can be upper bounded with an indicator
\begin{align*}
    &\Prob\pa{
        \left|
            \pa{\frac{1}{1-\eta(x)} - \frac{1}{1-\rho(c_x)}} \pa{
                \frac{1}{N} \sum_{n=1}^N e^{s(x,u_n)} - \frac{1}{M} \sum_{m=1}^M e^{s(x,v_m)}
            }
        \right|
        \geq \frac{\varepsilon e^{-1}}{3}
    }  \\
    \quad&\leq \Prob\pa{
        \left|
            \pa{\frac{1}{1-\eta(x)} - \frac{1}{1-\rho(c_x)}}
            \pa{e - e^{-1}}
        \right|
        \geq
        \frac{\varepsilon e^{-1}}{3}
    } \\
    \quad&\leq \mathbbm{1}\pc{
        \varepsilon \leq 3e^2 \left|
            \frac{1}{1-\eta(x)} - \frac{1}{1-\rho(c_x)}
        \right|
    }.
\end{align*}
Combine these 3 upper bounds together we have 
\begin{align*}
    \Prob\pa{
        \triangle \geq \varepsilon
    }
    &\leq 
    2 \exp\pa{
            - \frac{2N\varepsilon^2}{9e^4} \pa{1-\rho(c_x)}^2
    }
    + 2\exp\pa{
            - \frac{2M\varepsilon^2}{9e^4} \pa{\frac{1-\rho(c_x)}{\rho(c_x)}}^2
    } \\ 
    &\quad+ \mathbbm{1}\pc{
        \varepsilon \leq 3e^2 \left|
            \frac{1}{1-\eta(x)} - \frac{1}{1-\rho(c_x)}
        \right|
    }.
\end{align*}
Now we bound the approximation error using the tail probability we just derived. 
\begin{align*}
    \left| \sL - \overline{\sL} \right|
        &\leq \E_{(x,x^+)\sim\sD_{\text{sim}}}\pb{\triangle} 
            \tag{Jensen's Inequality} \\ 
        &= \E_{(x,x^+)\sim\sD_{\text{sim}}}\pb{
            \E\pb{ \triangle \mid x,x^+  }
        } \\ 
        &= \E_{(x,x^+)\sim\sD_{\text{sim}}}\pb{
            \int_{0}^{\infty} \Prob\pa{\triangle \geq \varepsilon \mid x,x^+}\, d\varepsilon
        }
            \tag{Write $\E$ using CDF} \\
        &\leq \E_{x\sim\sD} \pb{
            \int_0^{\infty} 2\exp\pa{
                - \frac{2N\varepsilon^2}{9e^4} \pa{1-\rho(c_x)}^2
            }\, d\varepsilon
        } \\
        &\quad+ \E_{x\sim\sD} \pb{
            \int_0^{\infty} 2\exp\pa{
                - \frac{2M\varepsilon^2}{9e^4} \pa{\frac{1-\rho(c_x)}{\rho(c_x)}}^2
            }\, d\varepsilon
        } \\ 
        &\quad+\E_{x\sim\sD} \pb{
            \int_0^{\infty} \mathbbm{1}\pc{
                \varepsilon \leq 3e^2 \left|
                    \frac{1}{1-\eta(x)} - \frac{1}{1-\rho(c_x)}
                \right|
            }\, d\varepsilon
        } \\ 
        &=
        3e^2 \sqrt{ \frac{\pi}{2N} } \E_{x\sim\sD}\pb{\frac{1}{1-\rho(c_x)}}
        + 3e^2 \sqrt{ \frac{\pi}{2M} } \E_{x\sim\sD}\pb{ \frac{\rho(c_x)}{1-\rho(c_x)} } \\
        &\quad+ 3e^2 \E_{x\sim\sD}\pb{ \left| \frac{1}{1-\eta(x)} - \frac{1}{1-\rho(c_x)} \right| }
            \tag{Use identity $\int_0^{\infty} e^{-cz^2}\, dz = \frac{1}{2} \sqrt{\frac{\pi}{c}}$}.
\end{align*}
\end{proof}

\subsection{Classification Generalization Bounds}
\label{appendix:generalization_bounds}

To show relationship between the different loss functions, we re-define the losses to show their dependence on the encoder $f$ and some additional parameters
\begin{align}
    \overline{\sL}(f, Q)
        &\triangleq \E_{(x,x^+)\sim\sD_{\text{sim}}} \pb{
            -\log \frac{
                e^{f(x)^Tf(x^+)}
            }{
                e^{f(x)^Tf(x^+)} + Q\E_{x^-\sim\sE_{c_x}}\pb{ e^{f(x)^Tf(x^-)} }
            }
        }, \\ 
    \sL(f, N, M)
        &\triangleq 
            \E_{ (x,x^+)\sim\sD_{\text{sim}}, \{u_n\}\sim\sD^N, \{v_m\}\sim\sD_{c_x}^M }
            \pb{
            -\log \frac{
                e^{f(x)^Tf(x^+)}
            }{
                e^{f(x)^Tf(x^+)} + Ng(x, \pc{u_n}, \pc{v_m})
            }
        }
    \label{eq:dcl_depends_on_fNM}
\end{align}
where the similarity function is dot product, i.e., $s(x,x') = f(x)^Tf(x')$. 

Following \citet{saunshiTheoreticalAnalysisContrastive2019,chuangDebiasedContrastiveLearning2020}, we characterize a representation $f$ on $K$-way classification task $\sT$ consisting of $K$ classes $\pc{c_1,\cdots,c_K}\subset \sC$. The supervised dataset is generated from $\sD_{\sT}(x,c) \propto \sD_c(x)\rho(c)$. Specifically, we fix the representations and train a linear classifier $h(x) \triangleq Wf(x)$ on task $\sT$ with softmax cross entropy loss $\sL_{\text{softmax}}(\sT,h)$. The supervised loss for classifier $h$ on task $\sT$ is 
\begin{align}
    \sL_{\text{sup}}(\sT, h)
        \triangleq \min_{W\in\R^{K\times D}} \sL_{\text{softmax}}(\sT,h).
\end{align}
We focus on the supervised loss of the mean classifier, where rows of $W$ are the means of the representations of inputs with label $c$, i.e., $W_{\mu} \triangleq \begin{bmatrix} \mu_1,\cdots,\mu_K \end{bmatrix}^T$ where $\mu_c = \E_{x\in\sD_c}\pb{f(x)}$. The supervised loss for the mean classifier is
\begin{align}
    \sL_{\text{sup}-\mu}(\sT, f)
        \triangleq \sL_{\text{sup}}(\sT, W_{\mu}f)
        = \E_{(x,c)\sim \sD_{\sT}} \pb{ - \log \frac{
            e^{f(x)^T\mu_c} 
        }{
            \sum_{c^-\in \sT} e^{f(x)^T\mu_{c^-}}
        } }.
\end{align}
We consider the average classification performance over the distribution of $K$-way multi-class classification tasks $p_{\sT}(\pc{c_i}_{k=1}^K) \propto \prod_{k=1}^K \rho(c_i) \cdot \mathbbm{1}\pb{c_i\neq c_j\,\forall i\neq j}$. The average supervised loss over tasks is 
\begin{align}
    \sL_{\text{sup}}(f) 
        \triangleq \E_{\sT\sim p_{\sT}} \sL_{\text{sup}}(\sT, f).
\end{align}

The following lemma bounds the supervised loss $\sL_{\text{sup}}$ with the asymptotic contrastive loss $\sL$.

\begin{lemma}
    \label{lem:sup_vs_dcl_loss_bounds}
    For any encoder $f$, whenever $N \geq \frac{1-\rho_{\text{min}}}{\rho_{\text{min}}}$ where $\rho_{\text{min}} \triangleq \min_{c\in\sC} \rho(c)$, we have 
    \begin{align}
        \sL_{\text{sup}}(f)
            \leq \sL_{\text{sup}-\mu}(f)
            \leq \overline{\sL}(f, N)
    \end{align}
\end{lemma}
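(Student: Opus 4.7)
The first inequality $\sL_{\text{sup}}(f) \leq \sL_{\text{sup}-\mu}(f)$ is immediate: the mean classifier $W_\mu$ is one particular linear classifier, while $\sL_{\text{sup}}(\sT, f)$ takes the minimum over all linear classifiers, and the inequality is preserved under the expectation over $\sT \sim p_\sT$.

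For the main inequality $\sL_{\text{sup}-\mu}(f) \leq \tilde{\sL}(f, N)$, my plan is to iteratively lower-bound $\tilde{\sL}(f, N)$ via Jensen's inequality, adapting the strategy of Saunshi et al.\ (2019) and Chuang et al.\ (2020) to the nonuniform class setting. The key fact is that $\phi_{y}(z) = -\log \frac{e^y}{e^y + z}$ is monotone increasing and concave in $z$, while $y \mapsto \phi_y(c)$ is convex in $y$ for any fixed $c > 0$ (both verifiable from the second derivative).

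First I would apply Jensen to the positive sample: condition on $(x, c_x)$ and use convexity in $y = f(x)^T f(x^+)$ of the inner expectation over $x^+ \sim \sD_{c_x}$, replacing $f(x^+)$ by $\mu_{c_x}$:
\[\tilde{\sL}(f, N) \geq \E_{x, c_x}\left[-\log \frac{e^{f(x)^T \mu_{c_x}}}{e^{f(x)^T \mu_{c_x}} + N\, \E_{x^- \sim \sE_{c_x}}[e^{f(x)^T f(x^-)}]}\right].\]
Next, decompose $\sE_{c_x} = \sum_{c \neq c_x} \tfrac{\rho(c)}{1-\rho(c_x)} \sD_c$; convexity of $\exp$ gives $\E_{x^- \sim \sD_c}[e^{f(x)^T f(x^-)}] \geq e^{f(x)^T \mu_c}$, and monotonicity of $\phi_y$ in $z$ lets me replace the inner expectation by a sum over class means:
\[\tilde{\sL}(f, N) \geq \E_{x, c_x}\left[-\log \frac{e^{f(x)^T \mu_{c_x}}}{e^{f(x)^T \mu_{c_x}} + N \sum_{c \neq c_x} \tfrac{\rho(c)}{1-\rho(c_x)} e^{f(x)^T \mu_c}}\right].\]
The third step internalizes the negative-class randomness: writing $N \sum_c \tfrac{\rho(c)}{1-\rho(c_x)} e^{f(x)^T \mu_c} = \E[\sum_{n=1}^N e^{f(x)^T \mu_{c_n^-}}]$ for iid draws $c_1^-, \ldots, c_N^-$ from $\rho$ conditioned on $c_n^- \neq c_x$, concavity of $\phi_y$ in $z$ and Jensen yield a task-style expression:
\[\tilde{\sL}(f, N) \geq \E_{x, c_x, c_1^-, \ldots, c_N^-}\left[-\log \frac{e^{f(x)^T \mu_{c_x}}}{e^{f(x)^T \mu_{c_x}} + \sum_{n=1}^N e^{f(x)^T \mu_{c_n^-}}}\right].\]

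Finally, I would convert the with-replacement expectation over negative classes into the without-replacement task distribution $p_\sT$. Duplicates among the $c_n^-$ inflate the denominator and hence the pointwise loss, so collapsing them can only decrease the bound. Under the hypothesis $N \geq (1-\rho_{\min})/\rho_{\min}$, the sampled multiset covers enough of the support of $\rho$ that a coupling between the reduced multiset $\{c_x\} \cup \{c_1^-, \ldots, c_N^-\}$ and a draw from $p_\sT$ is available, giving $\tilde{\sL}(f, N) \geq \sL_{\text{sup}-\mu}(f)$. I expect this last step to be the main obstacle: cleanly coupling the iid-with-replacement negative-class draws to the without-replacement task distribution while tracking the effect of duplicate draws. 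The condition on $N$ should enter through a counting or union-bound argument ensuring that each class in the target task appears at least once among the $N$ negatives with sufficient probability.
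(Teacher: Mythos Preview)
Your Jensen steps (the first inequality, and what you call steps 1--2) are correct and match the paper. The problem is everything from step~3 onward. Introducing i.i.d.\ negative-class draws via concavity is a valid inequality, but it is a detour that creates the very obstacle you then identify: you now need to couple a with-replacement sample of size $N$ to a without-replacement task draw from $p_\sT$. The hypothesis $N \geq (1-\rho_{\min})/\rho_{\min}$ is \emph{not} a coverage condition --- it does not guarantee, even with high probability, that any particular set of $K-1$ task classes appears among the $c_n^-$. A counting or union-bound argument of the kind you sketch would require something like $N \gtrsim \log(K)/\rho_{\min}$, which is not what you are given. Moreover, ``collapsing duplicates can only decrease the bound'' goes the wrong way: after you lower the expression by removing duplicates, you still have to show the \emph{smaller} quantity dominates $\sL_{\text{sup}-\mu}(f)$, and if the distinct set happens to be smaller than a task you would need to \emph{add} classes back, at which point the distributional match to $p_\sT$ is unclear.

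The paper avoids the coupling entirely by using the condition on $N$ deterministically, in two moves you are missing. First, $\tilde{\sL}(f,N)$ is monotone increasing in $N$, so one may immediately reduce to $N = \tfrac{1-\rho_{\min}}{\rho_{\min}}$. Second, after rewriting the expectations through the task distribution $p_\sT$ and applying Jensen (essentially your step~2, but with the task $\sT$ already fixed), the inner term is $\tfrac{1-\rho_{\min}}{\rho_{\min}}\,\E_{c^-\sim\rho_\sT(\cdot\mid c^-\neq c)}\big[e^{f(x)^T\mu_{c^-}}\big]$. The elementary inequality
\[
\frac{1-\rho_{\min}}{\rho_{\min}} \;\geq\; \frac{\sum_{c_k\in\sT:\,c_k\neq c}\rho(c_k)}{\rho(c^-)}
\qquad\text{for every } c^-\in\sT\setminus\{c\}
\]
(which holds since the numerator is at most $1-\rho_{\min}$ and the denominator at least $\rho_{\min}$) shows that every coefficient in the weighted average is at least~$1$, so the weighted average dominates the unweighted sum $\sum_{c^-\in\sT:\,c^-\neq c} e^{f(x)^T\mu_{c^-}}$. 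This yields $\sL_{\text{sup}-\mu}(f)$ directly, with no sampling or coupling. This is precisely where the threshold $(1-\rho_{\min})/\rho_{\min}$ enters, as a coefficient bound rather than a coverage guarantee.
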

\begin{proof}
    We first show that $N=\frac{1-\rho_{\text{min}}}{\rho_{\text{min}}}$ gives the smallest loss: 
    \begin{align}
        \overline{\sL}(f,N)
            &= \E_{(x,x^+)\sim\sD_{\text{sim}}} \pb{
                -\log \frac{
                    e^{f(x)^Tf(x^+)}
                }{
                    e^{f(x)^Tf(x^+)} + N\E_{x^-\sim \sE_{c_x}} \pb{
                        e^{f(x)^Tf(x^-)}
                    }
                }
            } \\ 
            &\geq \E_{(x,x^+)\sim\sD_{\text{sim}}} \pb{
                -\log \frac{
                    e^{f(x)^Tf(x^+)}
                }{
                    e^{f(x)^Tf(x^+)} + \frac{1-\rho_{\text{min}}}{\rho_{\text{min}}} \E_{x^-\sim \sE_{c_x}} \pb{
                        e^{f(x)^Tf(x^-)}
                    }
                }
            } \\ 
            &= \overline{\sL}\pa{
                f, \frac{1-\rho_{\text{min}}}{\rho_{\text{min}}} 
            }.
    \end{align}
    The task-specific class distribution is $\rho_{\sT}(\pc{c_k}) \propto \prod_{k=1}^K \rho(c_k)$. Now we show the asymptotic debiased loss upper bounds the supervised classification loss:
    \begin{align}
        &\overline{\sL}\pa{
            f, \frac{1-\rho_{\text{min}}}{\rho_{\text{min}}} 
        } \\
        &= \E_{(x,x^+)\sim\sD_{\text{sim}}} \pb{
                    -\log \frac{
                        e^{f(x)^Tf(x^+)}
                    }{
                        e^{f(x)^Tf(x^+)} + \frac{1-\rho_{\text{min}}}{\rho_{\text{min}}} \E_{x^-\sim \sE_{c_x}} \pb{
                            e^{f(x)^Tf(x^-)}
                        }
                    }
                } \\
        &= \E_{\sT\sim p_{\sT}, c\sim\rho_{\sT}, x,x^+\sim \sD_c^2} \pb{
                -\log\frac{
                    e^{f(x)^Tf(x^+)}
                }{
                    e^{f(x)^Tf(x^+)} + \frac{1-\rho_{\text{min}}}{\rho_{\text{min}}} \E_{\sT\sim p_{\sT}, c^-\sim \rho_{\sT}(\cdot\mid c^-\neq c), x^-\sim \sD_{c^-}} \pb{
                        e^{f(x)^Tf(x^-)}
                    }
                }
            } \\
        &\geq \E_{\sT\sim p_{\sT}, (x,c)\sim\sD_{\sT}} \pb{
            -\log\frac{
                e^{f(x)^T\mu_{c^+}}
            }{
                e^{f(x)^T\mu_{c^+}} +  \frac{1-\rho_{\text{min}}}{\rho_{\text{min}}} \E_{c^-\sim \rho_{\sT}(\cdot\mid c^-\neq c)} \pb{
                    e^{f(x)^T\mu_{c^-}}
                }
            }
        } \tag{Jensen's Inequality} \\
        &\geq \E_{\sT\sim p_{\sT}, (x,c)\sim\sD_{\sT}} \pb{
            -\log\frac{
                e^{f(x)^T\mu_{c^+}}
            }{
                e^{f(x)^T\mu_{c^+}} + 
                    \sum_{c^-\in \sT:c^-\neq c} e^{f(x)^T\mu_{c^+}}
            }
        } 
        \tag{$\frac{1-\rho_{\text{min}}}{\rho_{\text{min}}} \geq \frac{\sum_{c_k:c_k\neq c} \rho(c_k) }{\rho(c^-)}$ for any $c^-,c_1,\cdots,c_K\in\sC^{K+1}$}
        \\
        &= \sL_{\text{sup}-\mu}(f).
    \end{align}
    We have showed the second inequality in the Lemma. The first inequality follows from the definition of $\sL_{\text{sup}}$, i.e., $\sL_{\text{sup}}$ is the minimal loss over all weight matrices $W$ including that of the mean classifier $W_{\mu}$.
\end{proof}

We wish to derive a data dependent bound. We follow the proof strategy detailed in \citet{saunshiTheoreticalAnalysisContrastive2019}. We assume that the class-specific probability function depends on the representation, i.e., $\eta(x) = \kappa(f(x))$ for some $\kappa:\R^D\to[\eta_{\text{min}}, \eta_{\text{max}}]$. This assumption is introduced to use theoretical results from prior work; We leave generalization to arbitrary $\eta$ to future work. First, we express the debiased contrastive loss in an alternative form:
\begin{align}
    \label{eq:dcl_reexpress_using_loss}
    &\sL(f, N, M)
        = \E_{(x,x^+\sim\sD_{\text{sim}}), \pc{u_n} \sim \sD^N, \pc{v_m} \sim \sD_{c_x}^M}
            \pb{
                \ell\pa{\phi(f(x), f(x^+), \pc{f(u_n)}_n, \pc{f(v_m)}_n)}
            }.
\end{align}
In Equation~\ref{eq:dcl_reexpress_using_loss}, $\phi:\R^{D(M+N+2)}\to\R^{M+N+2}$ computes some statistics of the representations
\begin{align}
    \label{eq:phi_def}
    \phi(\overline{x}, \overline{x}^+, \pc{\overline{u}_n}, \pc{\overline{v}_m})
        \triangleq \pa{
            \pc{\overline{x}^T(\overline{u}_n - \overline{x}^+)}_n, 
            \pc{\overline{x}^T(\overline{v}_m - \overline{x}^+)}_m, 
            \overline{x}^T \overline{x}^+, 
            \kappa(\overline{x})
        },
\end{align}
In Equation~\ref{eq:dcl_reexpress_using_loss}, $\ell:\R^{N+M+2}\to\R$ is the loss function
\begin{align}
    \label{eq:dcl_ell}
    \ell(\pc{a_n}, \pc{b_m}, c, d)
        \triangleq \log\pb{
            1 + N \max\pa{
                \frac{1}{1-d} \pa{
                    \frac{1}{N} \sum_{n=1}^N e^{a_n} - d \frac{1}{M} \sum_{m=1}^M e^{b_m}
                }
                ,
                e^{-1-c}
            }
        }.
\end{align}
We can verify that Equation~\ref{eq:dcl_reexpress_using_loss} is correct by noting that the integrand in Equation~\ref{eq:dcl_depends_on_fNM} can be expressed using a composition of $\ell$ and $\phi$
\begin{align}
    &-\log\pb{
        \frac{
            e^{f(x)^Tf(x^+)}
        }{
            e^{f(x)^Tf(x^+)} + Ng(x, \pc{u_n}, \pc{v_m})
        }
    }
        = \log\pb{
            1 + N \frac{ g(x,\pc{u_n},\pc{v_m}) }{ e^{f(x)^Tf(x^+)} }
        } \\
        &\quad= \log\Bigg[
            1 + N \max\Bigg\{
                \frac{1}{1-\kappa(f(x))} \pa{
                    \frac{1}{N}\sum_{n=1}^N e^{f(x)^T(f(u_n) - f(x^+))} - \kappa(f(x)) \frac{1}{M} \sum_{m=1}^M e^{f(x)^T(f(v_m) - f(x^+))}
                } \\ 
                &\quad\quad\quad\quad\quad\quad\quad\quad\quad\quad,
                e^{-1-f(x)^Tf(x^+)}
            \Bigg\}
        \Bigg] \\
        &\quad= \ell\pa{\phi(f(x), f(x^+), \pc{f(u_n)}_n, \pc{f(v_m)}_n)}.
\end{align}

Given a data set $\sS \triangleq (x_t, x_t^+, \pa{u_{tn}}_n, \pc{v_{tm}}_m) \in \sD_{\text{sim}}\times\sD^N \times \sD_{c_{x_t}}^M$ with $T$ samples, the empirical estimate of the debiased contrastive loss in Equation~\ref{eq:dcl_reexpress_using_loss} is
\begin{align}
    \hat{\sL}(f, N, M)
        \triangleq \frac{1}{T} \sum_{t=1}^T \ell\pa{\phi(f(x_t), f(x_t^+), \pc{f(u_{tn})}_n, \pc{f(v_{tm})}_m)}.
\end{align}
The learning process finds a representation from a function class $\sF$ with bounded norm, i.e., $\sF \triangleq \pc{f:\sX\to\R^D \mid \norm{f(\cdot)} \leq 1}$. For example, $\sF$ can contain all functions that map to unit hypersphere. Specifically, the algorithm finds the encoder function $\hat{f}$ via empirical risk minimization, i.e., $\hat{f} \in \argmin_{f\in\sF} \hat{\sL}(f, N, M)$.

The following Lemma is a more specific version of Lemma A.2 in \citet{saunshiTheoreticalAnalysisContrastive2019}, to account for the differences in our definition of $\phi,\ell$ The goal is to bound the debiased contrastive loss for the risk minimizer $\hat{f}$. 

First we introduce some notations. $\sigma$ are Rademacher random variables. The restriction of $f$ onto $\sS$ is
\begin{align}
    f_{|\sS} \triangleq (  f_d(x_t), f_d(x_t^+), \pc{f_d(u_n)}_n, \pc{f_d(v_m)}_m )_{d\in\pb{D},t\in\pb{T}}.
\end{align}
$\sR_{\sS}(\sF)$ is the empirical Rademacher complexity of the function class $\sF$ with respect to the training data $\sS$, i.e., 
\begin{align}
    \sR_{\sS}(\sF) \triangleq \E_{\sigma\sim\pc{\pm 1}^{(M+N+2)DT}} \pb{\sup_{f\in\sF} \langle \sigma, f_{|\sS} \rangle }.
\end{align}
$h:\R^{(M+N+2)D}\to \R$ as $h \triangleq \ell \circ \phi$ is a function that computes the loss given the embeddings.

\begin{lemma}
    \label{lem:fhat_f_dcl_bounds}
    If $h$ is $L_h$-Lipschitz and $\ell$ bounded by $B$. With probability $1-\delta$, for all $f\in\sF$
    \begin{align}
        \sL(\hat{f}, N, M)
            \leq \sL(f, N, M) + \sO\pa{
                \frac{L_h \sR_{\sS}(\sF)}{T} + B \sqrt{\frac{\log\pa{\frac{1}{\delta}}}{T}}
            }.
    \end{align}
    Specifically,
    \begin{align}
        B
            &= \sO\pa{
                \log N 
            }
    \end{align}
    and $L_h = L_{\ell} \cdot L_{\phi}$ where
    \begin{align}
        L_{\ell} 
            &= \sO\pa{
                \sqrt{ \frac{1}{(1-\eta_{\text{max}})^2 N }  + \frac{ \eta_{\text{max}}^2 }{ (1-\eta_{\text{max}})^2 M } + \frac{1}{(1-\eta_{\text{max}})^4} }
            },
            \label{eq:lip_constant_ell}
            \\ 
        L_{\phi}
            &= \sO\pa{
                \sqrt{ N+M + \norm{\nabla \kappa}_2^2 }
            }.
    \end{align}
\end{lemma}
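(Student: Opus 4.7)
My plan is to follow the standard recipe for Rademacher-complexity generalization bounds, exactly as in \citet{saunshiTheoreticalAnalysisContrastive2019} but adapted to our $\ell$ and $\phi$. Because $\hat{f}$ minimizes the empirical loss $\hat{\sL}(f,N,M)$, a two-sided uniform deviation bound $\sup_{f\in\sF}|\sL(f,N,M)-\hat{\sL}(f,N,M)| \leq \Delta$ immediately gives $\sL(\hat{f},N,M) \leq \sL(f,N,M) + 2\Delta$ for every $f\in\sF$. So the core task is to bound this uniform deviation by $\sO(L_h \sR_{\sS}(\sF)/T + B\sqrt{\log(1/\delta)/T})$. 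I would apply the standard McDiarmid/Rademacher argument: first use McDiarmid on the bounded random variable $h\circ f$ (which is bounded by $B$) to get the $B\sqrt{\log(1/\delta)/T}$ concentration term, then symmetrize to introduce Rademacher averages over the composite class $\pc{h\circ f : f\in\sF}$, and finally apply the vector-valued contraction inequality (Maurer's) with Lipschitz constant $L_h$ to reduce to $\sR_{\sS}(\sF)$.

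Next I would verify the two constants. For $B$, note that each dot product of unit-norm embeddings lies in $[-1,1]$, so $e^{a_n},e^{b_m}\in[e^{-1},e]$ and $e^{-1-c}\leq 1$; since $d\in[\eta_{\min},\eta_{\max}]\subset[0,1)$, the inner argument of $\log$ in \eqref{eq:dcl_ell} is bounded by $1+N\cdot\frac{e}{1-\eta_{\max}}$, giving $B=\sO(\log N)$ (absorbing the $\eta_{\max}$ factor). For $L_h$, I would factor $h = \ell\circ\phi$ and bound the two Lipschitz constants separately. For $L_\phi$, the Jacobian of \eqref{eq:phi_def} has row norms of order $1$ for each of the $N+M+1$ bilinear coordinates (since all vectors have norm $\leq 1$) plus an additional $\norm{\nabla\kappa}_2$ contribution from the last coordinate; summing squares yields the claimed $\sO(\sqrt{N+M+\norm{\nabla\kappa}_2^2})$. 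For $L_\ell$, I would compute partial derivatives of $\ell$. Using $\partial_x \log(1+Nu) = \frac{N}{1+Nu}\partial_x u$ and the fact that the outer factor is bounded by $e^2$ on the relevant range, the non-trivial terms are: $\partial_{a_n}\ell = \sO(\frac{1}{(1-\eta_{\max})N})$ giving $\sum_n (\partial_{a_n}\ell)^2 = \sO(\frac{1}{(1-\eta_{\max})^2 N})$; $\partial_{b_m}\ell = \sO(\frac{\eta_{\max}}{(1-\eta_{\max})M})$ giving the $M$-term; and $\partial_d \ell$ involves $\partial_d[(A-dB)/(1-d)] = (A-B)/(1-d)^2$, contributing $\sO(1/(1-\eta_{\max})^4)$. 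The partial in $c$ only activates through the $e^{-1-c}$ branch of the $\max$ and is uniformly $\sO(1)$, absorbed into other terms. Taking the square root recovers \eqref{eq:lip_constant_ell}.

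The main obstacle is handling the $\max$ in \eqref{eq:dcl_ell} cleanly when computing $L_\ell$: $\max(\cdot,\cdot)$ is $1$-Lipschitz, so I would argue directly that the subgradient of $\ell$ at any point is dominated coordinatewise by the gradient of whichever branch is active, which preserves the bounds above. A secondary subtlety is that the sampled terms $\pc{u_{tn}}$ and $\pc{v_{tm}}$ are not independent across $t$ when $M,N$ are fixed and the underlying draws come from $\sD$ and $\sD_{c_{x_t}}$, but the $T$ outer samples are i.i.d., so McDiarmid applies with bounded differences of order $B/T$ per sample. Once these are handled, assembling the three pieces $B$, $L_\ell$, $L_\phi$ into the statement is routine.
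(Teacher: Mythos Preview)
Your proposal is correct and matches the paper's approach: the paper likewise defers the generalization inequality to Lemma~A.2 of \citet{saunshiTheoreticalAnalysisContrastive2019} and then computes $B$, $L_\ell$ (via the factorization $\ell=\omega\circ\psi$ with $\omega(z)=\log(1+Nz)$, bounding $L_\omega$ and $L_\psi$ separately), and $L_\phi$ by bounding Jacobian Frobenius norms exactly as you describe. One minor slip: $a_n=\overline{x}^T(\overline{u}_n-\overline{x}^+)$ and $b_m$ are \emph{differences} of dot products and hence lie in $[-2,2]$ rather than $[-1,1]$, but this only affects absolute constants and leaves all the $\sO(\cdot)$ claims intact.
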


\begin{proof}
    The first part of the Lemma is simply Lemma A.2 in \citet{saunshiTheoreticalAnalysisContrastive2019} without computing the Lipschitz constant for their definition of $\phi'(\overline{x}, \overline{x}^+, \pc{\overline{x}_n^-}_n) = (\overline{x}^T(\overline{x}_n^- - \overline{x}^+) )_n$. We refer reader to \citet{saunshiTheoreticalAnalysisContrastive2019} for more details.

    Note $\ell$ in Equation~\ref{eq:dcl_ell} is lower bounded by $\log(1)=0$. Note $a_n,b_m \in [-2,2]$, $c\in[-1,1]$, and $d\in[\eta_{\text{min}}, \eta_{\text{max}}]$. By plugging in $a_n = 2$, $b_m=2$, $c=-1$, and $d = \eta_{\text{max}}$, the upper bound of $\ell$ is 
    \begin{align}
        \ell(\pc{a_n},\pc{b_m}, c, d)
            \leq \log \pa{
                1 + N \max\pa{
                    \frac{1}{1-\eta_{\text{max}}} \pa{ e^2 - \eta_{\text{max}} e^{-2} },
                    1
                }
            }
            = \sO(\log N).
    \end{align}
    Therefore, $B = \sO(\log N)$. 

    To compute the Lipschitz constant for $\ell$, we express $\ell$ as composition of two functions $\psi,\omega$, i.e., $\ell = \omega \circ \psi$ where $\psi: \R^{N+M+2}\to\R$ and $\omega:\R\to\R$ are defined as
    \begin{align}
        \omega(z) 
            &= \log(1+Nz), \\ 
        \psi(\pc{a_n}, \pc{b_m}, c, d)
            &= \max\pa{
                \frac{1}{1-d}\pa{
                    \frac{1}{N} \sum_{n=1}^N e^{a_n} - d \frac{1}{M} \sum_{m=1}^M e^{b_m}
                },
                e^{-1-c}
            }.
        \label{eq:psi_def}
    \end{align}
    Note $z \geq e^{-1-c} \geq e^{-2}$ due to max in Equation~\ref{eq:psi_def}. Therefore, we can bound the Lipschitz constant for $\omega$ with
    \begin{align}
        \label{eq:lip_constant_omega}
        L_{\omega}
            \leq \left| \frac{d\omega}{dz} \right|
            \leq \frac{N}{1 + Ne^{-2}}
            \leq e^{-2}.
    \end{align}
    To bound the Lipschitz constant for $\psi$, we compute the Frobenius norm of the Jacobian $J_{\psi}$ of $\psi$. We use facts $a_n,b_m\in[-2,2]$, $c\in[-1,1]$, and $d\in[\eta_{\text{min}}, \eta_{\text{max}}]$ to derive upper bounds to the absolute value of the partial derivatives:
    \begin{align}
        \left| \frac{\partial \psi}{\partial a_n} \right| 
            &\leq \frac{e^{a_n}}{(1-d)N} 
            \leq \frac{e^2}{(1-\eta_{\text{max}})N} 
                \quad\quad n\in\pb{N} \\ 
        \left| \frac{\partial\psi}{\partial b_m} \right| 
            &\leq \frac{d \cdot e^{b_m} }{(1-d)M}
            \leq \frac{\eta_{\text{max}} e^2}{(1-\eta_{\text{max}})M} 
                \quad\quad m\in\pb{M} \\ 
        \left| \frac{\partial\psi}{\partial c} \right|
            & = \frac{1}{(1-d)^2} \pa{
                \frac{1}{N}\sum_{n=1}^N e^{a_n} - \frac{1}{M}\sum_{m=1}^M e^{b_m} 
            }
            \leq \frac{e}{(1-\eta_{\text{max}})^2}  \\ 
        \left|  \frac{\partial\psi}{\partial d} \right|
            &\leq e^{-1-c} \leq 1.
    \end{align}
    We use the Frobeninus norm of the Jacobnian to bound the Lipschitz constant
    \begin{align}
        \label{eq:lip_constant_psi}
        L_{\psi} 
            \leq \norm{J_{\psi}}_2  
            \leq \norm{J_{\psi}}_F
            \leq \sqrt{
                \sum_{n=1}^N \frac{e^4}{ (1-\eta_{\text{max}})^2 N^2 } + 
                \sum_{m=1}^M \frac{ \eta_{\text{max}}^2 }{ (1-\eta_{\text{max}})^2 M^2 } + 
                \frac{ e^2 }{ (1-\eta_{\text{max}})^4 } + 
                1
            }.
    \end{align}
    Combine Lipschitz constants $L_{\omega}$ in Equation~\ref{eq:lip_constant_omega} and $L_{\psi}$ in Equation~\ref{eq:lip_constant_psi}, a Lipschitz constant $L_{\ell} = L_{\omega}\cdot L_{\psi}$ for $\ell$ in Equation~\ref{eq:lip_constant_ell} follows directly.
     
    To compute a Lipschitz constant for $\phi$, we use similar tactic as previous by computing the Jacobnian $J_{\phi}$ of $\phi$. We first expand out the inner products in Equation~\ref{eq:phi_def},
    \begin{align}
        \label{eq:phi_def_expanded}
        \phi(\overline{x}, \overline{x}^+, \pc{\overline{u}_n}, \pc{\overline{v}_m})
            = \pa{
                \pc{ \sum_{d} \overline{x}_d(\overline{u}_{dn} - \overline{x}^+_{d}) }_n, 
                \pc{ \sum_{d} \overline{x}_d(\overline{v}_{dm} - \overline{x}^+_{d}) }_m, 
                \sum_d \overline{x}_d \overline{x}_d^+, 
                \kappa(\overline{x})
            }.
    \end{align}
    The partial derivatives of this vector-valued function $\phi$, given $d\in\pb{D}$, are
    \begin{align}
        \frac{\partial \phi_i}{ \partial \overline{x}_d }
            &= \begin{cases}
                (\overline{u}_{di} - \overline{x}^+_d) & i\in\pb{N} \\
                (\overline{v}_{d(i-N)} - \overline{x}^+_d) & i=N+1,\cdots,N+M \\
                \overline{x}^+_d & i = N+M+1 \\
                \frac{\partial \kappa}{\partial \overline{x}_d} & i = N+M+2
            \end{cases} \\ 
        \frac{\partial \phi_i}{\partial \overline{x}^+_d}
            &= \begin{cases}
                -\overline{x}_d & i \in [N+M] \\ 
                \overline{x}_d & i = N+M+1 \\
                0 & \text{o.w.}
            \end{cases} \\ 
        \frac{\partial \phi_i}{\partial \overline{u}_{dj} }
            &= \begin{cases}
                \overline{x}_d & i = j, j \in \pb{N} \\ 
                0 & \text{o.w.} \\ 
            \end{cases} \\ 
        \frac{\partial \phi_i}{\partial \overline{v}_{dj} }
            &= \begin{cases}
                \overline{x}_d & i = j + N, j \in [M] \\ 
                0 & \text{o.w.} \\ 
            \end{cases}.
    \end{align}
    We use the Frobenius norm of the Jacobian to bound the Lipschitz constant
    \begin{align}
        L_{\phi}
            &\leq \norm{J_{\phi}}_2 
            \leq \norm{J_{\phi}}_F \\ 
            &\leq \sqrt{
                \begin{aligned}
                    \sum_{i=1}^N \sum_d (\overline{u}_{di} - \overline{x}_d^+ )^2 + 
                    \sum_{i=N+1}^{N+M} \sum_d (\overline{v}_{d(i-N)} - \overline{x}_d^+)^2 + 
                    \sum_d (\overline{x}_d^+)^2 +  \\ 
                    \sum_d \pa{ \frac{\partial \kappa}{\partial \overline{x}_d} }^2 + 
                    (N+M) \sum_d (\overline{x}_d)^2 + 
                    \sum_d (\overline{x}_d)^2 + 
                    N\sum_d (\overline{x}_d)^2 + 
                    M\sum_d (\overline{x}_d)^2 
                \end{aligned}
            } \\ 
            &\leq \sqrt{
                4N + 4M + 1 + \norm{\nabla \kappa}_2^2 + (N+M) + 1 + N + M
            } \\ 
            &= \sO\pa{
                \sqrt{ N+M + \norm{\nabla \kappa}_2^2 }
            }
    \end{align}
    Here, norm of representations is bounded, i.e., $\norm{\overline{x}}_2,\norm{\overline{x}^+}_2,\norm{\overline{u}_n}_2, \norm{\overline{v}_m}_2 \leq 1$. 

    We can compute a Lipschitz constant for $h$ by multiplying that for $\ell,\phi$, i.e., $L_h = L_{\ell} \cdot L_{\phi}$. 
\end{proof}

The following theorem provides the generalization bounds on classification tasks. 

\begin{theorem}
    \label{thm:cls_generalization_bounds}
    With probability at least $1-\delta$, for all $f\in\sF$, and $N\geq \frac{1-\rho_{\text{min}}}{\rho_{\text{min}}}$, 
    \begin{align}
        \sL_{\text{sup}}(\hat{f})
            \leq \sL(f, N, M) + 
            \sO\pa{
                \sqrt{ \frac{1}{M} } + \sqrt{ \frac{1}{N} } + \E_{x\sim\sD}\pb{\left| \frac{1}{1-\eta(x)} - \frac{1}{1-\rho(c_x)} \right|} + \frac{L_h \sR_{\sS}(\sF)}{T} + B\sqrt{ \frac{\log(\frac{1}{\delta})}{T} }
            }
    \end{align}
\end{theorem}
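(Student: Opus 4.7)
The plan is to chain together the three earlier results to telescope from $\sL_{\text{sup}}(\hat f)$ to $\sL(f,N,M)$ plus error terms, in the order: supervised-to-asymptotic, asymptotic-to-empirical, and empirical ERM generalization.

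First I would invoke Lemma~\ref{lem:sup_vs_dcl_loss_bounds} applied to $\hat f$: since $N\geq (1-\rho_{\text{min}})/\rho_{\text{min}}$, we get $\sL_{\text{sup}}(\hat f) \leq \tilde{\sL}(\hat f, N)$ with no extra terms. Next, I would add and subtract $\sL(\hat f, N, M)$, so that
\begin{align*}
\sL_{\text{sup}}(\hat f) \;\leq\; \sL(\hat f, N, M) + \bigl|\tilde{\sL}(\hat f, N) - \sL(\hat f, N, M)\bigr|.
\end{align*}
Proposition~\ref{prop:dcl_estimation_error}, applied at $f=\hat f$, immediately bounds that gap by
\begin{align*}
\tfrac{3e^2\sqrt{\pi/2}}{\sqrt N}\,\E_{x\sim\sD}\bigl[\tfrac{1}{1-\rho(c_x)}\bigr] + \tfrac{2e^2}{\sqrt M}\,\E_{x\sim\sD}\bigl[\tfrac{\rho(c_x)}{1-\rho(c_x)}\bigr] + 2e^2\,\E_{x\sim\sD}\bigl[\bigl|\tfrac{1}{1-\eta(x)}-\tfrac{1}{1-\rho(c_x)}\bigr|\bigr],
\end{align*}
which, treating $1/(1-\rho(c_x))$ and $\rho(c_x)/(1-\rho(c_x))$ as absorbed in the $\sO(\cdot)$ via a mild boundedness assumption on $\rho$ (or folded into the expectations as in \citet{chuangDebiasedContrastiveLearning2020}), gives the $\sO(1/\sqrt N + 1/\sqrt M + \E[|1/(1-\eta(x)) - 1/(1-\rho(c_x))|])$ contribution.

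Finally I would apply Lemma~\ref{lem:fhat_f_dcl_bounds}: since $\hat f\in\sF$ is the empirical risk minimizer of $\hat{\sL}(\cdot,N,M)$, with probability at least $1-\delta$ we have $\sL(\hat f, N, M) \leq \sL(f, N, M) + \sO(L_h\sR_\sS(\sF)/T + B\sqrt{\log(1/\delta)/T})$ uniformly for all $f\in\sF$. Combining the three bounds yields the statement.

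The hard part is essentially bookkeeping rather than new analysis: all three pieces exist. The one subtlety is that Proposition~\ref{prop:dcl_estimation_error} bounds $|\tilde\sL - \sL|$ with expectations of $1/(1-\rho(c_x))$ in the leading coefficients, whereas the theorem advertises plain $\sO(1/\sqrt N + 1/\sqrt M)$; I would justify this either by assuming $\rho_{\max} < 1$ (so those expectations are constants) or by treating $1/(1-\rho_{\max})$ as a problem-dependent constant absorbed in $\sO(\cdot)$, consistent with the convention in \citet{saunshiTheoreticalAnalysisContrastive2019,chuangDebiasedContrastiveLearning2020}. The other minor obstacle is verifying that Lemma~\ref{lem:fhat_f_dcl_bounds}'s hypothesis (that $h=\ell\circ\phi$ is Lipschitz and $\ell$ is bounded) already supplies the constants $L_h$ and $B$ appearing in the statement, which is exactly what the Lemma establishes; no additional computation is required.
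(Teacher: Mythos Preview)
Your proposal is correct and mirrors the paper's proof almost exactly: the paper chains Lemma~\ref{lem:sup_vs_dcl_loss_bounds}, then Proposition~\ref{prop:dcl_estimation_error}, then Lemma~\ref{lem:fhat_f_dcl_bounds} in the same order, and likewise absorbs the $\E_{x\sim\sD}[1/(1-\rho(c_x))]$ and $\E_{x\sim\sD}[\rho(c_x)/(1-\rho(c_x))]$ factors into the $\sO(\cdot)$ constants without further comment. Your discussion of the bookkeeping subtleties is accurate and, if anything, more careful than the paper's own treatment.
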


\begin{proof}
    Simply combine the results that we proved so far gives us the result
    \begin{align}
        \sL_{\text{sup}}(\hat{f})
            &\leq \overline{\sL}(\hat{f}, N) 
                \tag{ Use Lemma~\ref{lem:sup_vs_dcl_loss_bounds} } \\ 
            &\leq \sL(\hat{f}, N, M) + \sO\pa{
                \sqrt{ \frac{1}{M} } + \sqrt{ \frac{1}{N} } + \E_{x\sim\sD}\pb{\left| \frac{1}{1-\eta(x)} - \frac{1}{1-\rho(c_x)} \right|}
            }
                \tag{ Use Proposition~\ref{prop:dcl_estimation_error} } \\ 
            &\leq \sL(f, N, M) + \sO\pa{
                \sqrt{ \frac{1}{M} } + \sqrt{ \frac{1}{N} } + \E_{x\sim\sD}\pb{\left| \frac{1}{1-\eta(x)} - \frac{1}{1-\rho(c_x)} \right|} + \frac{L_h \sR_{\sS}(\sF)}{T} + B\sqrt{ \frac{\log(\frac{1}{\delta})}{T} }
            }
                \tag{ Use Lemma~\ref{lem:fhat_f_dcl_bounds} }.
    \end{align}
\end{proof}

This bound states that if (i) the function class $\sF$ is sufficiently rich (i.e., contains some encoder function for which $\sL(f, N, M)$ is small), (ii) trained over large data (i.e., $T$ is large), (iii) the number of negatives sampled is large (i.e., large $N,M$), and (iv) $\eta$ provides a good estimate of sample-specific class probabilities, then the encoder $\hat{f}$ will perform well on the downstream classification tasks.

\section{Additional Details for Experiments}

\begin{table}[ht]
    \centering
    \begin{tabular}{l|ll}
    \toprule
    \textbf{Label} & \textbf{Negative Prompt} & \textbf{Positive Prompt} \\
    \midrule
    pneumonia       & ``No signs of pneumonia''         & ``Findings suggesting pneumonia'' \\
    cardiomegaly    & ``Heart size normal''             & ``Cardiomegaly'' \\
    edema           & ``No pulmonary edema''            & ``Pulmonary edema'' \\
    effusion        & ``No pleural effusions''          & ``Pleural effusions'' \\
    pneumothorax    & ``No pneumothorax''               & ``Pneumothorax'' \\
    \bottomrule
    \end{tabular}
    \caption{Zero-shot image classification prompts.}
    \label{tab:zs_cls_prompts}
    \end{table}

\section{Additional Results}
\label{appendix:additional_results}

\begin{figure}
    \centering 
    \includegraphics[width=.9\textwidth]{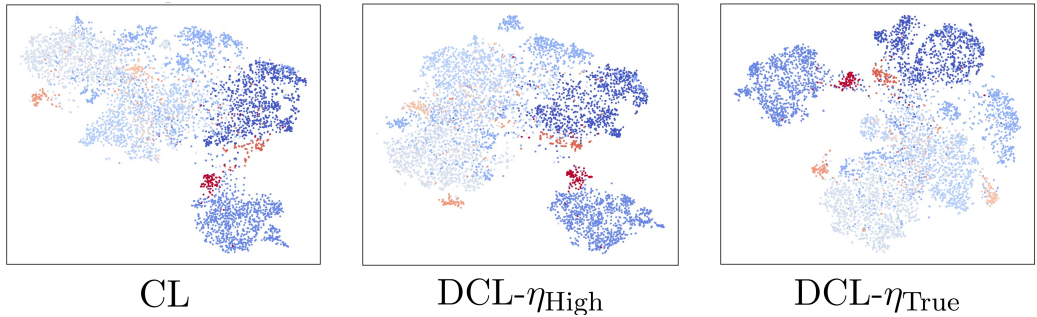}
    \caption{t-SNE visualization of learned representations on CIFAR10-0.1. Subsampled classes are in shades of red and the remaining classes are in shades of blue. Using the true sample-specific class probability function (DCL-$\eta_{\text{True}}$) leads to better class separation than both contrastive learning (CL) and using the incorrect sample-specific class probability function (DCL-$\eta_{\text{High}}$), especially for the subsampled classes. }
    \label{fig:cifar10_qualitative}  
  \end{figure}

\begin{table}[ht]
    \centering
    \begin{tabular}{L{0.14\linewidth}R{0.11\linewidth}R{0.11\linewidth}R{0.11\linewidth}R{0.11\linewidth}R{0.11\linewidth}R{0.11\linewidth}p{0pt}}
    \toprule
    Method & Cardiomegaly & Edema & Effusion & Pneumonia & Pneumothorax & Avg &  \\
    \midrule
    BioViL & 0.752 & 0.887 & \textBF{0.914} & 0.768 & 0.594 & 0.783 &  \\
    LSE+NL & \textBF{0.888} & 0.876 & 0.798 & 0.853 & 0.526 & 0.788 &  \\
    \scriptsize \quad w/ DCL-$\eta_{\text{LM}}$ & 0.884 & \textBF{0.889} & 0.913 & \textBF{0.861} & \textBF{0.760} & \textBF{0.862} &  \\
    \bottomrule
    \end{tabular}
    \caption{Zero-shot image classification performance. We report the AUC for 5 common radiology findings and their average (Avg) on CheX5 data set. Image-text model trained with our method outperforms state-of-the-art baselines. }
    \label{tab:sota_zs_cls}
\end{table} 
    
\begin{table}[ht]
    \centering
    \begin{tabular}{L{0.16\linewidth}C{0.12\linewidth}C{0.12\linewidth}p{0pt}}
    \toprule
    Method & CNR$\uparrow$ & mIoU$\uparrow$ &  \\
    \midrule
    BioViL & 1.142 & 0.174 &  \\
    LSE+NL & 1.400 & 0.190 &  \\
    \scriptsize \quad w/ DCL-$\eta_{\text{LM}}$ & \textBF{1.486} & \textBF{0.195} &  \\
    \bottomrule
    \end{tabular}
    \caption{Visual grounding performance. We report a measure of the discrepancy between region-sentence scores inside and outside the ground truth bounding box (contrast-to-noise ratio or CNR) and the mean IoU of a thresholded region-sentence map and the ground truth bounding box over a set of threshold (mIoU). Image-text model trained with our method outperforms BioViL and LSE+NL on both measures.}
    \label{tab:sota_grounding}
\end{table}

\begin{table}[ht]
\centering
\begin{tabular}{L{0.140\linewidth}C{0.075\linewidth}C{0.075\linewidth}C{0.075\linewidth}C{0.075\linewidth}|C{0.075\linewidth}C{0.075\linewidth}C{0.075\linewidth}C{0.075\linewidth}p{0pt}}
    \toprule
Method & \multicolumn{4}{c}{\quad Image $\rightarrow$ Text} & \multicolumn{4}{c}{\quad Text $\rightarrow$ Image} &  \\
& \scriptsize R@10 $\uparrow$& \scriptsize R@50 $\uparrow$& \scriptsize R@100 $\uparrow$& \scriptsize MedR$\downarrow$ & \scriptsize R@10 $\uparrow$& \scriptsize R@50 $\uparrow$& \scriptsize R@100 $\uparrow$& \scriptsize MedR$\downarrow$ &  \\
\midrule
BioViL & 0.07 & 0.26 & 0.40 & 151 & 0.08 & 0.26 & 0.40 & 146 &  \\
LSE+NL & 0.09 & 0.29 & 0.44 & 123 & \textBF{0.10} & 0.32 & 0.49 & 107 &  \\
\scriptsize \quad w/ DCL-$\eta_{\text{LM}}$ & \textBF{0.10} & \textBF{0.31} & \textBF{0.48} & \textBF{106} & \textBF{0.10} & \textBF{0.33} & \textBF{0.50} & \textBF{102} &  \\
\bottomrule
\end{tabular}
\caption{Cross-modal retrieval performance. We report recall for the top 10, 50 and 100 answers returned by the method (R@k), as well as the median rank of the ground truth element for sentence retrieval based on region queries and for region retrieval based on sentence queries (MedR). Our method improves the baselines.}
\label{tab:sota_retrieval}
\end{table}

\begin{figure}[ht]
    \centering
    \begin{verbatim}
        # pos: exponential for positive example
        # neg: sum of exponentials for negative examples
        # N : number of negative examples
        # eta : fixed class probability
        # p: likelihood of the text scored by a language model
        # a, k: hyperparameter that maps p to sample-specific class probability
        
        standard_loss = -log(pos / (pos + neg))
        dcl_loss = -log(pos / (pos + (neg - N * eta * pos) / (1 - eta)))
        eta_LM = a * p**k
        dcl_LM_loss = -log(pos / (pos + (neg - N * eta_LM * pos) / (1 - eta_LM)))
    \end{verbatim}
    \caption{Pseudocode for our proposed sample-specific class probability estimate for debiased contrastive learning with $M=1$. The likelihood of the text scored by a language model can be precomputed and yield negligible computational overhead. Implementing our proposed changes requires minimal modification to code.}
    \label{fig:pseudocode}
\end{figure}

\end{document}